\documentclass[a4paper, 10pt]{article}
\usepackage{a4wide}

\setcounter{secnumdepth}{2} 

%

\usepackage{amsmath}
\usepackage{amsthm}
\usepackage{xcolor}
\usepackage{thm-restate}

\usepackage{AM-style}

\usepackage{tikz}
\usetikzlibrary{arrows.meta}

\theoremstyle{plain}

\newtheorem{proposition}{Proposition}

\theoremstyle{definition}
\newtheorem{definition}{Definition}

\theoremstyle{remark}
\newtheorem{example}{Example}
\newtheorem{remark}{Remark}
\newtheorem{notation}{Notation}

\usepackage{authblk}

\title{Necessary and Sufficient Explanations in Abstract Argumentation}
\author[1]{AnneMarie Borg}
\author[1,2]{Floris Bex}
\affil[1]{Department of Information and Computing Sciences, Utrecht University}
\affil[2]{Department of Law, Technology, Markets and Society, Tilburg University}
\affil[ ]{\texttt {\{a.borg, f.j.bex\}@uu.nl}}

\begin{document}

\maketitle
\begin{abstract} 
\noindent 
   In this paper, we discuss \emph{necessary} and \emph{sufficient} explanations for formal argumentation --  the question whether and why a certain argument can be accepted (or not) under various extension-based semantics. Given a framework with which explanations for argumentation-based conclusions can be derived, we study necessity and sufficiency: what (sets of) arguments are necessary or sufficient for the (non-)acceptance of an argument? 
\end{abstract}

\section{Introduction}

In recent years, \emph{explainable AI} (XAI) has received much attention, mostly directed at new techniques for explaining decisions of (subsymbolic) machine learning algorithms \cite{samek2017explainable}. However, explanations traditionally also play an important role in (symbolic) knowledge-based systems \cite{lacave2004review}. Computational argumentation is one research area in symbolic AI that is frequently mentioned in relation to XAI. For example, arguments can be used to provide reasons for or against decisions \cite{lacave2004review,ABGHPRSTV17,Mil19}. The focus can also be on the argumentation itself, where it is explained whether and why a certain argument or claim can be accepted under certain semantics for computational argumentation \cite{FaTo15AAAI,FaTo15TAFA,GCRS13,saribaturWW20}. It is the latter type of explanations that is the subject of this paper.

Abstract argumentation frameworks, as introduced in~\cite{Dung95}, consist of sets of arguments (abstract entities) and an attack relation between them. To determine the conclusions of a framework their corresponding extensions -- sets of arguments that can collectively be considered as acceptable -- are calculated under different semantics~\cite{Dung95}. Many of the well-known and most studied semantics are based on the notion of \emph{defense}: an argument is defended by a set of arguments if that set attacks all its attackers. 

In this paper we investigate explanations for ar\-gu\-men\-ta\-tion-based conclusions (i.e., why an argument is (not) part of an or all extensions), by applying a basic framework for explanations on top of abstract argumentation frameworks which can be evaluated by any extension-based semantics. 
The explanations are defined in terms of sets of relevant arguments that are part of extensions and explain the (non-)ac\-cep\-tance in terms of defense. We consider an argument \emph{relevant} for another argument if the arguments are connected by means of the attack relation. By requiring relevance of an explanation, arguments that do not (in)directly attack or defend the considered argument (and therefore do not influence the acceptability of that argument) will not be part of the explanation. 
Since defense is a central notion in many Dung-style semantics, the explanations thus defined can be applied to all the common semantics (e.g., complete, grounded, preferred). 

One of the important characteristics of explanations provided by humans is that they select \emph{the} explanation from a possible infinite set of explanations, using criteria such as simplicity, necessity and sufficiency~\cite{Mil19}. In this paper we look at how to select minimal\footnote{Interpreting~\cite{Mil19} simplicity as minimality}, necessary and sufficient explanations for the (non-)ac\-cep\-tance of an argument. 

After introducing some preliminary notions concerning (defense) in abstract argumentation frameworks, we provide a \emph{basic framework} with which explanations for both acceptance and non-acceptance of an argument can be provided, given any existing extension-based semantics that is based on the notion of defense. We show that these explanations are well-behaved with respect to different Dung-style semantics (e.g. explanations under grounded semantics are never supersets of explanations under preferred semantics), and discuss notions of \emph{minimality} introduced in~\cite{FaTo15AAAI} applied to explanations in our framework. 

We continue by discussing the notions of \emph{necessity} and \emph{sufficiency}, introducing sufficient and necessary explanations for (non-)acceptance. For these explanations we show when they exists and how the explanations provided by the basic framework are related to necessary and sufficient explanations. Furthermore, we show how our notions of necessity and sufficiency relate to the notions of minimality from~\cite{FaTo15AAAI}. We conclude with discussing related and future work.

\section{Preliminaries}
\label{sec:Preliminaries}

An \emph{abstract argumentation framework\/} (AF)~\cite{Dung95} is a pair $\calAF = \tuple{\Args, \attack}$, where $\Args$ is a set of \emph{arguments\/} and $\attack\subseteq\Args\times\Args$ is an \emph{attack relation\/} on these arguments. Such a framework can be viewed as a directed graph, in which the nodes represent arguments and the arrows represent attacks between arguments, see e.g., Figure~\ref{fig:AAAIRunning}. 

\begin{figure}[ht]
    \centering
    \begin{tikzpicture}[scale = 0.95]
        \node [circle,draw] at (0,0) (A) {$A$};
        \node [circle,draw] at (2,0) (B) {$B$};
        \node [circle,draw] at (4,1) (C) {$C$};
        \node [circle,draw] at (6,1) (D) {$D$};
        \node [circle,draw] at (4,-1) (E) {$E$};
        \node [circle,draw] at (6,-1) (F) {$F$};
        \node [circle,draw] at (8,-1) (G) {$G$};
        
        \draw[->] (B) to (A);
        \draw[->] (C) to (B);
        \draw[->] (C) to [bend right] (D);
        \draw[->] (D) to [bend right] (C);
        \draw[->] (E) to (B);
        \draw[->] (F) to (E);
        \draw[->] (F) to [bend right] (G);
        \draw[->] (G) to [bend right] (F);
    \end{tikzpicture}
    \caption{Graphical representation of the AF $\calAF_1$.}
    \label{fig:AAAIRunning}
\end{figure}
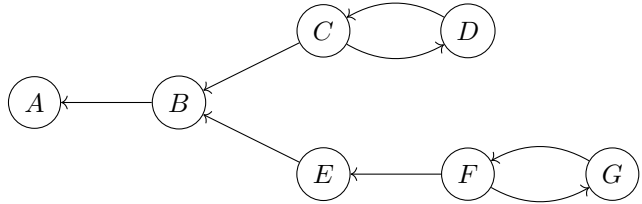

\begin{example}
  \label{ex:abstractAF}
  Figure~\ref{fig:AAAIRunning} represents $\calAF_1 = \tuple{\Args_1,\attack_2}$ where $\Args_1 = \{A,\allowbreak B,\allowbreak C,\allowbreak D,\allowbreak E,\allowbreak F,\allowbreak G\}$ and $\attack_1 = \{(B,A),\allowbreak (C,B),\allowbreak (C,D),\allowbreak (D,C),\allowbreak (E,B), \allowbreak (F,E),\allowbreak (F,G),\allowbreak (G,F)\}$. 
\end{example}

%
%
%

Given an AF, Dung-style semantics~\cite{Dung95} can be applied to it, to determine what combinations of arguments (called \emph{extensions}) can collectively be accepted. 

\begin{definition}
  \label{def:extension}
  Let $\calAF=\tuple{\Args,\attack}$ be an AF, $\sfS\subseteq\Args$ a set of arguments and let $A\in\Args$. Then $\sfS$ \emph{attacks} $A$ if there is an $A'\in\sfS$ such that $(A',A)\in\attack$; $\sfS$ \emph{defends} $A$ if $\sfS$ attacks every attacker of $A$; $\sfS$ is \emph{conflict-free} if there are no $A_1,A_2\in\sfS$ such that $(A_1,A_2)\in\attack$; and $\sfS$ is \emph{admissible} ($\adm$) if it is conflict-free and it defends all of its elements. 

An admissible set that contains all the arguments that it defends is a \emph{complete extension} ($\cmp$) of $\calAF$. The \emph{grounded extension} ($\grd$) is the minimal (with respect to $\subseteq$) complete extension. A \emph{preferred extension} ($\prf$) is a maximal (with respect to $\subseteq$) complete extension. A \emph{stable extension} ($\stb$) is a complete extension that attacks every argument not in it. $\exts_\sem(\calAF)$ denotes the set of all the extensions of $\calAF$ under the semantics $\sem\in\{\adm,\cmp,\allowbreak\grd,\allowbreak\prf,\allowbreak\stb\}$.
\end{definition}

Where $\calAF = \tuple{\Args,\attack}$ is an AF, $\sem$ a semantics and $\exts_\sem(\calAF)\neq\emptyset$, it is said that $A\in\Args$ is \emph{skeptically} [resp.\ \emph{credulously}] \emph{accepted} if $A\in\bigcap\exts_\sem(\calAF)$ [resp.\ $A\in\bigcup\exts_\sem(\calAF)$]. These acceptability strategies are denoted by $\cap$ [resp.\ $\cup$]. $A$ is said to be \emph{credulously} [resp.\ \emph{skeptically}] \emph{non-accepted} in $\calAF$ if for all [resp.\ some] $\ext\in\exts_\sem(\calAF)$,  $A\notin\ext$. We will say that an argument is accepted [resp.\ non-accepted] if the strategy is arbitrary or clear from the context.

The notions of attack and defense can also be defined between arguments and can be generalized to indirect versions:

\begin{definition}
  \label{def:DefenseExtension}
  Let $\calAF = \tuple{\Args,\attack}$ be an AF, $A,B\in\Args$ and $\ext\in\exts_\sem(\calAF)$ for some $\sem$. Then  $A$ \emph{defends} $B$ if: there is some $C\in\Args$ such that $(C,B)\in\attack$ and $(A,C)\in\attack$, in this case $A$ \emph{directly defends $B$}; or $A$ defends $C\in\Args$ and $C$ defends $B$, in this case  $A$ \emph{indirectly defends $B$}. It is said that \emph{$A$ defends $B$ in $\ext$} if $A$ defends $B$ and $A\in\ext$.

  Similarly, $A$ \emph{attacks} $B$ if: $(A,B)\in\attack$, in this case $A$ \emph{directly attacks} $B$; or $A$ attacks some $C\in\Args$ and $C$ defends $B$, in this case $A$ \emph{indirectly attacks} $B$.
\end{definition}

We will require that an explanation for an argument $A$ is \emph{relevant}, in order to prevent that explanations contain arguments that do not influence the acceptance of $A$.

\begin{definition} 
    \label{def:relevance}
    Let $\calAF = \tuple{\Args,\attack}$ and $A,B\in\Args$. It is said that $A$ is \emph{relevant} for $B$ if $A$ (in)directly attacks or defends $B$ and it does not attack itself. A set $\sfS\subseteq\Args$ is relevant for $B$ if all of its arguments are relevant for $B$. 
\end{definition}


\begin{example}
  \label{ex:abstractAFextensions}
  In $\calAF_1$ (from Figure~\ref{fig:AAAIRunning}) $C$ and $D$ attack each other and both defend themselves from this attack. The grounded extension is $\emptyset$ and $\exts_\sem(\calAF_1) = \{\{A,\allowbreak C,\allowbreak E,\allowbreak G\},\allowbreak \{A,\allowbreak C,\allowbreak F\},\allowbreak \{A,\allowbreak D,\allowbreak E,\allowbreak G\},\allowbreak \{B,\allowbreak D,\allowbreak F\}\}= \{\ext_1,\allowbreak \ext_2,\allowbreak \ext_3,\allowbreak \ext_4\}$ for $\sem\in\{\prf,\stb\}$. None of the arguments from $\Args_1$ is skeptically accepted, while all of the arguments in $\Args_1$ are credulously accepted for $\sem\in\{\cmp,\prf,\stb\}$. The argument $G$ defends $E$ directly and $A$ indirectly and, similarly, it attacks $F$ directly and $B$ indirectly. 
  The arguments $D$ and $G$ are relevant for $A$ and $B$ but not relevant for each other.
\end{example}

In order to define relevant explanations, we focus on the notion of defense. Note that many of the well-known Dung-style semantics result in defended sets of arguments. We define two notions that will be used in the basic definitions of explanations. The first, used for acceptance explanations, denotes the set of arguments that defend the argument $A$, while the last, used for non-acceptance explanations, denotes the set of arguments that attack $A$ and for which there is no defense in the given extension. 

\begin{definition}
  \label{def:GenDefAtt}
  Let $\calAF = \tuple{\Args,\attack}$ be an AF, $A\in\Args$ and $\ext\in\exts_\sem(\calAF)$ an extension for some semantics $\sem$. 
  \begin{itemize}
    \item $\DefBy(A) = \{B\in\Args\mid B \text{ (in)directly defends } A\}$;
    \item $\DefBy(A,\ext) = \DefBy(A)\cap\ext$ denotes the set of arguments that (in)directly defend~$A$ in $\ext$ ;
    \item $\NotDef(A,\ext) = \{B\in \Args \mid B \text{ (in)directly }\allowbreak\text{attacks }A \allowbreak\text{ and } \allowbreak \ext \allowbreak \text{ does not attack }B\}$, denotes the set of all (in)direct attackers of $A$ for which no defense exists from $\ext$.
  \end{itemize}
\end{definition}

Note that by definition explanations in terms of $\DefBy$ or $\NotDef$ result in relevant explanations.

\begin{example}[Example~\ref{ex:abstractAFextensions} continued]
  \label{ex:DefBy}
  Consider again the AF $\calAF_1$. Then: $\DefBy(A,\ext_1) = \{C,E,G\}$, $\DefBy(A,\ext_2) = \{C\}$, while $\NotDef(A,\ext_4) = \{B,D,F\}$ and $\NotDef(B,\ext_1) = \{C,E,G\}$.
\end{example}

We introduce the following notation to keep the notation of the explanations general and short. 

\begin{notation}
  \label{not:SetsExts}
  Let $\calAF = \tuple{\Args,\attack}$ be an AF, $A\in\Args$ and $\sfS\subseteq\Args$. Then, for some $\sem$:
  \begin{itemize}
    \item $\allext_A^\sem = \{\ext\in\exts_\sem(\calAF) \mid A\in\ext\}$ denotes the set of $\sem$-extensions of $\calAF$ of which $A$ is a member;
    \item $\allext_{\not A}^\sem = \{\ext\in\exts_\sem(\calAF) \mid A\notin\ext\}$ denotes the set of $\sem$-extensions of $\calAF$ of which $A$ is not a member;
  \end{itemize}
\end{notation}

Now the basic framework for explanations in formal argumentation can be defined. 

\section{Basic Explanations}
\label{sec:Basic}

In this section we present the basic framework with which a variety of explanations can be provided. Explanations are defined in terms of a function $\depth$, which determines how ``far away'' we should look when considering attacking and defending arguments as explanations. $\depth$ can be adjusted according to, for example, the application. Examples of adjustments are shown in Sections~\ref{sec:NecSuff:Acc} and~\ref{sec:NecSuff:NonAcc}. 

\subsection{Basic Explanations for Acceptance}
\label{sec:Basic:accept}

We define two types of acceptance explanations: $\cap$-ex\-pla\-na\-tions provide all the reasons why an argument can be accepted by a skeptical reasoner, while $\cup$-explanations provide one reason why an argument can be accepted by a credulous reasoner.
For the purpose of this section let $\depth^\acc = \DefBy$. This means that the presented explanations will be based on the set of arguments that defend the requested argument. 


\begin{definition}[Argument explanation]
  \label{def:Expl:FullAbstract}
  Let $\calAF = \tuple{\Args,\attack}$ be an AF and let $A\in\Args$ be an argument that is accepted, given some $\sem$ and an acceptance strategy ($\cap$ or $\cup$). Then: 
  \begin{align*}
      \Acc_\sem^\cap(A) &= \bigcup_{\ext\in\exts_\sem(\calAF)}\depth^\acc(A,\ext);\hspace*{2cm}\\
      \Acc_\sem^\cup(A) &\in \{\depth^\acc(A,\ext)\mid \ext\in\allext_A^\sem\}.
  \end{align*}
  Let $\texttt{Acc}^\cup_\sem(A) = \{\depth^\acc(A,\ext)\mid \ext\in\allext^\sem_A\}$.
\end{definition}

$\Acc^\cap_\sem(A)$ provides for each $\sem$-extension $\ext$ the arguments that defend $A$ in $\ext$, and $\Acc^\cup_\sem(A)$ contains the arguments that defend $A$ in one of the $\sem$-extensions.

\begin{remark} 
  \label{rem:DefensenotDefended}
  A non-empty $\DefBy$-set for an attacked argument $A$ does not guarantee that $A$ is defended against all its attackers. In the AF $\calAF = \langle\{A,\allowbreak B,\allowbreak C\},\allowbreak \{(A,\allowbreak B),\allowbreak (B, C),\allowbreak (C,\allowbreak A)\}\rangle$ we have that $\DefBy(A) = \{A,\allowbreak B,\allowbreak C\}$ but $A \notin\bigcup\exts_\cmp(\calAF)$ (since it is not conflict-free with $B$ nor with $C$). In the above definition of explanations this issue does not occur, since there it is required that the argument is part of the extension and thus, by definition of the semantics, it is defended against all its attackers by that extension. 
\end{remark}


\begin{example}[Example~\ref{ex:DefBy} continued]
  \label{ex:Expl:FullAbstract:Abstract}
  Recall $\calAF_1 = \tuple{\Args_1,\attack_1}$, shown in Figure~\ref{fig:AAAIRunning}. We have that: 
  \begin{itemize}
    \item $\Acc^\cup_\prf(A) \in \{\{C,E,G\}, \{C\}, \{E,G\}\}$; and
    \item $\Acc^\cup_\prf(B) = \{D,F\}$.
  \end{itemize}
\end{example}

The next proposition shows that the acceptance explanations, under $\depth = \DefBy$, are well-behaved when compared to Dung-style semantics. For example it shows that explanation under grounded semantics results in  smaller and more skeptical explanations than under preferred semantics. 

\begin{restatable}{proposition}{propAccSem}
  \label{prop:Acc:Semantics}
  Let $\calAF = \tuple{\Args,\attack}$ be an argumentation framework, let $A\in\Args$ and let $\depth = \DefBy$, then:
  \begin{enumerate}
    \item For all $\sem\in\{\grd,\cmp,\prf,\stb\}$ and for $\star\in\{\cup,\cap\}$: $\Acc_\grd(A)\subseteq \Acc_\sem^\star(A)$.
    \item $\Acc^\cap_\stb(A) \subseteq\Acc^\cap_\prf(A)\subseteq\Acc^\cap_\cmp(A)$.
    \item $\texttt{Acc}_\stb^\cup(A)\subseteq\texttt{Acc}_\prf^\cup(A)\subseteq\texttt{Acc}_\cmp^\cup(A)$.  
    \item For each $\sfS\in\texttt{Acc}^\cup_\cmp(A)$ there is an $\sfS'\in\texttt{Acc}^\cup_\prf(A)$ such that $\sfS\subseteq\sfS'$
  \end{enumerate}
\end{restatable}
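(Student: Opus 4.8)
The plan is to reduce all four items to three standard order-theoretic facts about Dung semantics combined with one elementary observation about $\DefBy$. The observation is that $\DefBy(A,\ext) = \DefBy(A)\cap\ext$, so the map $\ext\mapsto\DefBy(A,\ext)$ is monotone with respect to $\subseteq$ (if $\ext\subseteq\ext'$ then $\DefBy(A,\ext)\subseteq\DefBy(A,\ext')$), and, by distributivity of intersection over union, $\Acc^\cap_\sem(A) = \bigcup_{\ext\in\exts_\sem(\calAF)}\DefBy(A,\ext) = \DefBy(A)\cap\bigcup_{\ext\in\exts_\sem(\calAF)}\ext$. The three semantic facts are: (i) the grounded extension is the unique minimal complete extension, hence $\grd\subseteq\ext$ for every complete extension $\ext$ (in particular for every preferred and every stable extension); (ii) every stable extension is preferred and every preferred extension is complete, so $\exts_\stb(\calAF)\subseteq\exts_\prf(\calAF)\subseteq\exts_\cmp(\calAF)$; and (iii) every complete extension is contained in some preferred extension, since preferred extensions are the $\subseteq$-maximal admissible sets and every admissible set extends to a maximal one. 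Since $A$ is assumed accepted the relevant extension sets are non-empty; moreover, if $A\in\grd$ then by (i) $A$ lies in every complete extension, so every explanation occurring in the statement is well-defined.

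For items 1 and 2 I would work with the reformulation $\Acc^\cap_\sem(A) = \DefBy(A)\cap\bigcup_{\ext\in\exts_\sem(\calAF)}\ext$. Item 2 is then immediate: by (ii) the unions $\bigcup\exts_\stb(\calAF)\subseteq\bigcup\exts_\prf(\calAF)\subseteq\bigcup\exts_\cmp(\calAF)$ increase, and intersecting each with $\DefBy(A)$ preserves the inclusions. For item 1, since the grounded extension is unique the two strategies coincide and $\Acc_\grd(A) = \DefBy(A)\cap\grd$. In the $\cap$-case, $\grd\subseteq\bigcup_{\ext\in\exts_\sem(\calAF)}\ext$ by (i), so intersecting with $\DefBy(A)$ gives $\Acc_\grd(A)\subseteq\Acc^\cap_\sem(A)$. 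In the $\cup$-case, $\Acc^\cup_\sem(A) = \DefBy(A)\cap\ext$ for some chosen $\ext\in\allext^\sem_A$; as $\ext$ is complete, $\grd\subseteq\ext$ by (i), and again intersecting with $\DefBy(A)$ yields the claim.

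Items 3 and 4 concern the set-valued object $\texttt{Acc}^\cup_\sem(A) = \{\DefBy(A,\ext)\mid\ext\in\allext^\sem_A\}$. For item 3, fact (ii) gives $\allext^\stb_A\subseteq\allext^\prf_A\subseteq\allext^\cmp_A$ (an extension containing $A$ under a stricter semantics is also one under a looser semantics), so the corresponding families of $\DefBy$-sets are nested as required. For item 4, take $\sfS = \DefBy(A,\ext)\in\texttt{Acc}^\cup_\cmp(A)$ with $\ext$ complete and $A\in\ext$; by (iii) there is a preferred $\ext'\supseteq\ext$, whence $A\in\ext'$ and, by monotonicity of $\DefBy(A,\cdot)$, $\sfS = \DefBy(A,\ext)\subseteq\DefBy(A,\ext') =: \sfS'$ with $\sfS'\in\texttt{Acc}^\cup_\prf(A)$.

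I do not expect a serious obstacle: the whole argument threads the standard containments of the semantics through the monotone, intersection-preserving map $\ext\mapsto\DefBy(A)\cap\ext$. The points needing the most care are (a) correctly invoking fact (iii) — that every complete extension, being admissible, embeds into a maximal admissible (i.e.\ preferred) extension — which is exactly what makes item 4 a statement about individual witness sets rather than about their unions (unlike item 2); and (b) the existence bookkeeping, namely ensuring the chosen $\cup$-witnesses exist ($\allext^\sem_A\neq\emptyset$) and that the grounded explanation is meaningful, both of which follow from the acceptance hypothesis on $A$ together with fact (i).
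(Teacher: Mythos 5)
Your proof is correct and takes essentially the same route as the paper's: both arguments rest on exactly the same three facts — the grounded extension is contained in every complete (hence preferred, stable) extension, $\exts_\stb(\calAF)\subseteq\exts_\prf(\calAF)\subseteq\exts_\cmp(\calAF)$, and every complete extension extends to a preferred one — threaded through the monotonicity of $\ext\mapsto\DefBy(A)\cap\ext$. Your algebraic repackaging $\Acc^\cap_\sem(A)=\DefBy(A)\cap\bigcup_{\ext\in\exts_\sem(\calAF)}\ext$ is only a cosmetic variant of the paper's element-chasing, and your explicit check that $A\in\ext'$ in item~4 is a small point the paper glosses over.
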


\begin{proof}
  Let $\calAF = \tuple{\Args,\attack}$ be an argumentation framework, let $A\in\Args$ be accepted and let $\depth = \DefBy$.
  \begin{enumerate}
    \item Let $B\in\Acc_\grd(A)$, then \(B\) (in)directly defends \(A\) and $B\in\exts_\grd(\calAF)$, therefore $B\in\bigcap\exts_\sem(\calAF)$. It follows that $B\in\DefBy(A,\ext)$ for all $\ext\in\exts_\sem(\calAF)$. Hence $B\in\Acc_\sem^\star(A)$ as well. 
    \item Let $B\in\Acc^\cap_\stb(A)$, then there is some $\ext\in\exts_\stb(\calAF)$ such that $B\in\DefBy(A,\ext)$. Since $\exts_\stb(\calAF)\subseteq\exts_\prf(\calAF)\subseteq\exts_\cmp(\calAF)$ it follows that $\ext\in\exts_\sem(\calAF)$ for $\sem\in\{\prf,\cmp\}$ as well. Therefore $B\in\Acc^\cap_\prf(A)$ and $B\in\Acc^\cap_\cmp(A)$. 
    \item Let $\sfS\in\texttt{Acc}_\stb^\cup(A)$, then $\sfS = \DefBy(A,\ext)$ for some $\ext\in\exts_\stb(\calAF)\subseteq\exts_\prf(\calAF)\subseteq\exts_\cmp(\calAF)$. Therefore $\sfS\in\texttt{Acc}_\prf^\cup(A)$ and $\sfS\in\texttt{Acc}_\cmp^\cup(A)$ as well. 
    \item Let $\sfS\in\texttt{Acc}^\cup_\cmp(A)$, then there is some $\ext\in\exts_\cmp(\calAF)$ such that $\sfS = \DefBy(A,\ext)$. By Definition~\ref{def:extension} there is some $\ext\subseteq\ext'\in\exts_\prf(\calAF)$, let $\sfS' = \DefBy(A,\ext')$. Note that $\sfS'\in\texttt{Acc}^\cup_\prf(A)$ and that $\sfS\subseteq\sfS'$. 
    \qedhere
  \end{enumerate}
\end{proof}


\subsection{Basic Explanations for Non-Acceptance}
\label{sec:Basic:nonaccpt}

Understanding why something is not accepted might sometimes be just as important as understanding why something is accepted. We therefore discuss basic definitions for explanations of non-accepted arguments. Since we focus on the notion of defense and admissibility-based semantics, an argument is not accepted if it is attacked and there is no defense for this attack by an accepted argument. In this section, let $\depth^\nacc = \NotDef$. 

\begin{definition}[Non-acceptance explanation]
  \label{def:Expl:NonAcc:Abstract}
  Let $\calAF = \tuple{\Args,\attack}$ be an AF and let $A\in\Args$ be an argument that is not accepted, given some $\sem$ and some $\star\in\{\cap,\cup\}$. 
  \begin{align*}
    \NotAcc_\sem^\cap(A) &= \bigcup_{\ext\in\allext_{\not A}^\sem}\depth^\nacc(A,\ext);\hspace*{2.2cm} \\
    \NotAcc_\sem^\cup(A) &= \bigcup_{\ext\in\exts_\sem(\calAF)} \depth^\nacc(A,\ext).
  \end{align*}
\end{definition}

Intuitively, a non-acceptance explanation contains all the arguments in $\Args$ that attack $A$ and for which no defense exists in: some $\sem$-extensions (for $\cap$) of which $A$ is not a member; all $\sem$-extensions (for $\cup$). That for $\cap$ only some extensions have to be considered follows since $A$ is skeptically non-accepted as soon as $\allext_{\not A}^\sem\neq \emptyset$, while $A$ is credulously non-accepted when $\allext_{\not A}^\sem =\exts_\sem(\calAF)$. 

\begin{example}(Example~\ref{ex:DefBy} continued)
  \label{ex:Expl:NonAcc:Abstract}
  Recall the argumentation framework from Example~\ref{ex:abstractAF}, shown in Figure~\ref{fig:AAAIRunning}. Then:
  \begin{itemize}
    \item $\NotAcc_\grd^\cap(A) = \NotAcc_\prf^\cap(A)= \{B,D,F\}$; and
    \item $\NotAcc^\cap_\grd(B) = \NotAcc_\prf^\cap(B) = \{C,E,G\}$.
  \end{itemize}
\end{example}

The next proposition is the non-acceptance counterpart of Proposition~\ref{prop:Acc:Semantics} and shows how non-acceptance explanations are related to each other under different semantics.

\begin{restatable}{proposition}{propNonAccSem}
  \label{prop:NonAcc:Semantics}
  Let $\calAF = \tuple{\Args,\attack}$ be an AF, let $A\in\Args$ be non-accepted, $\depth = \NotDef$ and let $\star\in\{\cap,\cup\}$. Then:
  \begin{enumerate}
    \item $\NotAcc^\cap_\grd(A) \subseteq \NotAcc^\cap_\cmp(A)$. 
    \item $\NotAcc^\star_\stb(A) \subseteq\NotAcc^\star_\prf(A)\subseteq\NotAcc^\star_\cmp(A)$. 
  \end{enumerate}
\end{restatable}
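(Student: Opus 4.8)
The plan is to reduce both parts to a single, purely set-theoretic observation: whenever $I\subseteq I'$ are two collections of extensions, $\bigcup_{\ext\in I}\NotDef(A,\ext)\subseteq\bigcup_{\ext\in I'}\NotDef(A,\ext)$, regardless of how $\NotDef(A,\ext)$ itself varies with $\ext$ (note that a larger $\ext$ attacks more arguments, so $\NotDef(A,\ext)$ is \emph{not} monotone in $\ext$ — but this is irrelevant, since we only enlarge the \emph{index set}). Once this is in place, the entire argument comes down to establishing the correct inclusions between the index sets occurring in Definition~\ref{def:Expl:NonAcc:Abstract}, and these in turn rest on the standard chain $\exts_\stb(\calAF)\subseteq\exts_\prf(\calAF)\subseteq\exts_\cmp(\calAF)$ together with the fact that the grounded extension is the unique (minimal) complete extension.

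For part~2 with $\star=\cup$, the index set is the full extension set $\exts_\sem(\calAF)$, so both inclusions follow at once from the chain $\exts_\stb(\calAF)\subseteq\exts_\prf(\calAF)\subseteq\exts_\cmp(\calAF)$ and union monotonicity. For part~2 with $\star=\cap$, the index set is $\allext_{\not A}^\sem$, and I would first lift the extension-set chain to the restricted sets: any stable (resp.\ preferred) extension omitting $A$ is in particular a preferred (resp.\ complete) extension omitting $A$, so $\allext_{\not A}^\stb\subseteq\allext_{\not A}^\prf\subseteq\allext_{\not A}^\cmp$; union monotonicity then closes the case.

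For part~1 I would use that the grounded extension is unique; call it $G$. Since $A$ is (skeptically) non-accepted under $\grd$, the set $\allext_{\not A}^\grd$ is nonempty, hence equals $\{G\}$, and so $\NotAcc_\grd^\cap(A)=\NotDef(A,G)$. Because $G$ is a complete extension with $A\notin G$, it lies in $\allext_{\not A}^\cmp$, whence $\NotDef(A,G)\subseteq\bigcup_{\ext\in\allext_{\not A}^\cmp}\NotDef(A,\ext)=\NotAcc_\cmp^\cap(A)$, as required.

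The computations are routine, so I do not expect a genuine obstacle; the only points needing care are bookkeeping ones. First, I must ensure the compared explanations are all \emph{defined}, i.e.\ that $A$ stays non-accepted in the relevant $\cap$/$\cup$ sense as we ascend the chain; for $\cap$ this is automatic, since omission of $A$ by a stable extension is witnessed again at the preferred and complete levels, while for $\cup$ the union is well-defined set-theoretically in any case. Second, in part~1 I should not assume $A\notin G$ gratuitously but derive it from skeptical non-acceptance together with the uniqueness of the grounded extension — this is the one place where a property beyond the raw extension inclusions is actually used.
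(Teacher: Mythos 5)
Your proof is correct and takes essentially the same route as the paper's own: part~2 reduces to the index-set inclusions $\allext^\stb_{\not A}\subseteq\allext^\prf_{\not A}\subseteq\allext^\cmp_{\not A}$ (respectively $\exts_\stb(\calAF)\subseteq\exts_\prf(\calAF)\subseteq\exts_\cmp(\calAF)$ for $\cup$) together with monotonicity of the union in its index set, and part~1 to the grounded extension being a complete extension that, by non-acceptance, omits $A$, hence lies in $\allext^\cmp_{\not A}$. Your explicit derivation of $A\notin G$ from skeptical non-acceptance and the definedness bookkeeping only make precise steps the paper states conditionally or in passing.
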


\begin{proof}
  Let $\calAF = \tuple{\Args,\attack}$ be an argumentation framework, let $A\in\Args$ be non-accepted and let $\depth = \NotDef$. 
  \begin{enumerate}
    \item Let $\ext_\grd \in\exts_\grd(\calAF)$ be the grounded extension, note that $\ext_\grd = \bigcap\exts_\cmp(\calAF)$ and that $\ext_\grd\in\exts_\cmp(\calAF)$~\cite{Dung95}. It therefore follows that, if $A\notin\ext_\grd$, then $\ext_\grd\in\allext^\cmp_{\not A}$. By Definition~\ref{def:Expl:NonAcc:Abstract} we have immediately that $\NotAcc^\cap_\grd(A) \subseteq\NotAcc^\cap_\cmp(A)$. 
    \item Suppose that $A$ is not accepted w.r.t.\ $\stb$ and $\cap$, then for some $\ext\in\exts_\stb(\calAF)$, $A\notin\ext$. Note that $\allext^\stb_{\not A}\subseteq\allext^\prf_{\not A}\subseteq\allext^\cmp_{\not A}$. Therefore, for all $\ext\in\allext^\stb_{\not A}$, $\ext\in\allext^\prf_{\not A}$ and $\ext\in\allext^\cmp_{\not A}$. Hence $\NotAcc^\cap_\stb(A)\subseteq\NotAcc^\cap_\prf(A)\subseteq\NotAcc^\cap_\cmp(A)$. 
    
    The case for $\cup$ is similar and left to the reader. Note that, by assumption, $A$ is not accepted w.r.t.\ $\cmp$ and $\cup$, otherwise $\NotAcc^\cup_\cmp(A)$ is not defined. \qedhere
  \end{enumerate}
\end{proof}
%
%

Proposition~\ref{prop:Acc:Semantics} together with the above result shows that the choice of the semantics influences the size of the explanation in a similar way as the choice of semantics influences the size and number of extensions themselves. This is useful to know, since it shows that the explanations behave in a predictable way and that semantics can be chosen as usual. In the next section we will look at minimal explanations.

\section{Minimality}
\label{sec:Minimality}

As mentioned in the introduction, humans select \emph{the} explanation from all the possible explanations, using criteria such as simplicity, necessity and sufficiency~\cite{Mil19}. One way to look at simplicity is minimality.
%
%
In~\cite{FaTo15AAAI} two notions of minimality were introduced (as well as two notions of maximality, but we are in this paper only interested in minimality): minimality (i.e., minimality w.r.t.\ $\leq$)\footnote{Where minimality w.r.t.\ $\leq$ is applied to the size of sets: $\sfS_1\leq\sfS_2$ denotes $|\sfS_1|\leq|\sfS_2|$} and compactness (i.e., minimality w.r.t.\ $\subseteq$). In our setting we can formulate different minimal explanations for (non-)acceptance as follows, where $\mima \in \{\min^\leq,\min^\subseteq\}$.
\begin{align*}
  &\Acc^\cap_\sem(A) = \underset{\ext\in\exts_\sem(\calAF)}{\mima}\depth^\acc(A,\ext);\hspace*{2.5cm}\\
  &\Acc^\cup_\sem(A) = \underset{\ext\in\allext_A^\sem}{\mima}\ \depth^\acc(A,\ext);\\
  &\NotAcc^\cap_\sem(A) = \underset{\ext\in\allext^\sem_{\not A}}{\mima}\ \depth^\nacc(A,\ext); \\
  &\NotAcc^\cup_\sem(A) = \underset{\ext\in\exts_\sem(\calAF)}{\mima}\depth^\nacc(A,\ext).
\end{align*}

\begin{example}[Examples~\ref{ex:Expl:FullAbstract:Abstract} and~\ref{ex:Expl:NonAcc:Abstract} continued]
    \label{ex:MonCom}
    Recall that, for $\calAF_1$, $\Acc^\cup_\prf(A) \in \{\{C,E,G\},\{C\}, \allowbreak\{E,G\}\}$. Of these possible explanations  both $\{C\}$ and $\{E,G\}$ are $\subseteq$-minimal, but only $\{C\}$ is $\leq$-minimal. Similarly, we had that $\NotAcc^\cap_\prf(B) = \{C,E,G\}$, there are two $\subseteq$-minimal explanations: $\{C\}$ and $\{E,G\}$ but only $\{C\}$ is also a $\leq$-minimal explanation. 
\end{example}

These notions of minimality are already useful in restricting the size of an explanation. As we have seen in the example above, if \(\mima = \min^\subseteq\) $\Acc^\cup_\prf(A)\in \{\{C\}, \{E,G\}\}$. It is therefore no longer the case that $\Acc^\cup_\prf(A)$ could be $\{C,E,G\}$ when considering minimal explanations. However, these notions do not say anything about necessity, sufficiency or even relevance. In the next section we therefore look further into restricting the size of explanations, this time based on relevance, sufficiency and necessity.

\section{Necessity and Sufficiency}

Necessity and sufficiency in the context of philosophy and cognitive science are discussed in, for example, ~\cite{Lip90,Lombrozo10,Woodward-6}. 
Intuitively, an event $\Gamma$ is \emph{sufficient} for $\Delta$ if no other causes are required for $\Delta$ to happen, while $\Gamma$ is \emph{necessary} for $\Delta$, if in order for $\Delta$ to happen, $\Gamma$ has to happen as well. In the context of logical implication (denoted by $\rightarrow$), one could model sufficiency by $\Gamma\rightarrow \Delta$ and necessity by $\Delta\rightarrow \Gamma$~\cite{Lin01}. In the next sections we formulate these logical notions in our argumentation setting. 

\subsection{Necessity and Sufficiency for Acceptance}
\label{sec:NecSuff:Acc}

In the context of argumentation, where explanations are sets of arguments, a set of accepted arguments is sufficient if it guarantees, independent of the status of other arguments, that the considered argument is accepted, while an accepted argument is necessary if it is impossible to accept the considered argument without it. 

\begin{definition}
  \label{def:Acc:NecSuff}
  Let $\calAF = \tuple{\Args,\attack}$ be an AF and let $A\in\Args$ be accepted (w.r.t.\ $\sem$ and $\cup$ or $\cap$). Then:
  \begin{itemize}
      \item $\sfS\subseteq\Args$ is \emph{sufficient for the acceptance} of $A$ if $\sfS$ is relevant for $A$, $\sfS$ is conflict-free and $\sfS$ defends $A$ against all its attackers; 
      \item $B\in\Args$ is \emph{necessary for the acceptance} of $A$ if $B$ is relevant for $A$ and if $B\notin\ext$ for some $\ext\in\exts_\adm(\calAF)$, then $A\notin\ext$. 
  \end{itemize}
\end{definition}


\begin{example}[Example~\ref{ex:MonCom} continued]
  \label{ex:Acc:NecSuff:Min}
  In $\calAF_1$ both $\{C\}$ and $\{E\}$ are sufficient for the acceptance of $A$ but neither is necessary, while for $E$, $\{G\}$ is both sufficient and necessary. 
\end{example}

In order to use the above notions as variations of $\depth$, we introduce:\footnote{$\Suff$ and $\Nec$ are defined for an argument and the empty set. We do so because $\depth$ requires an argument and an extension.}
\begin{itemize}
    \item $\Suff(A,\emptyset) = \{\sfS\subseteq\Args\mid \sfS \text{ is }\allowbreak\text{sufficient }\allowbreak\text{for }\allowbreak\text{the }\allowbreak\text{acceptance }\allowbreak\text{of }A\}$ denotes the set of all sufficient sets of arguments for the acceptance of $A$;
    \item $\Nec(A,\emptyset) = \{A\in\Args \mid A \text{ is }\allowbreak\text{necessary }\allowbreak\text{for }\allowbreak\text{the }\allowbreak\text{acceptance }\allowbreak\text{of }A\}$ denotes the set of all arguments that are necessary for the acceptance of $A$. 
\end{itemize} 

\begin{remark}
  \label{rem:SemanticsNecSuff}
  When $\depth\in\{\Suff,\Nec\}$, $\Acc^\star_\sem$ is the same for any semantics $\sem$. This is the case since the definition of sufficiency and necessity is not defined w.r.t.\ $\sem$. 
\end{remark}

\begin{example}[Example~\ref{ex:Acc:NecSuff:Min} continued]
  \label{ex:Acc:NecSuff:Notation}
  For $\calAF_1$ we have that 
  $\Acc^\cup_\sem(B) \in \{\{D,F\},\{B,D,F\}\}$ for $\depth = \Suff$ and $\Acc^\cup_\sem(B) = \{D,F\}$ for $\depth = \Nec$. This means that the credulous acceptance of $B$ can be explained by the existence of the arguments $D$ and $F$, which are both sufficient and necessary for $B$ to be accepted.
\end{example}

Next we show some useful properties of sufficient and necessary (sets of) arguments for acceptance. In particular, we show that the sets in $\Suff(A,\emptyset)$ are admissible and contain all the needed arguments. We further look at conditions under which $\Suff$ and $\Nec$ are empty, as well as the relation between $\Suff$ and $\Nec$.

\begin{restatable}{proposition}{propAccNecSuffSC}
    \label{prop:Acc:NecSuff:SoundCompl}
    Let $\calAF = \tuple{\Args,\attack}$ be an AF and let $A\in\Args$ be accepted w.r.t.\ some $\sem\in\{\adm,\allowbreak \cmp,\allowbreak \grd,\allowbreak \prf,\allowbreak \stb\}$ and $\star\in\{\cap,\cup\}$. Then:
    \begin{enumerate}
        \item For all $\sfS\in\Suff(A,\emptyset)$, $\{\sfS,\sfS\cup\{A\}\}\subseteq\exts_\adm(\calAF)$;
        \item $\Suff(A,\emptyset) = \emptyset$ iff there is no $B\in\Args$ such that $(B,A)\in\attack$;
        \item $\Nec(A,\emptyset) = \emptyset$ iff there is no $B\in\Args$ such that $(B,A)\in\attack$ or $\bigcap\Suff(A,\emptyset) = \emptyset$. 
        \item $\Nec(A,\emptyset)\subseteq\bigcap\Suff(A,\emptyset)$.
    \end{enumerate}
\end{restatable}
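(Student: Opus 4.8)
The plan is to treat the three defining conditions of a sufficient set (relevance, conflict-freeness, and defence of $A$) as the common currency of all four parts, and to tie them to admissibility through one recurring observation that I would establish first: whenever $\ext$ is an admissible extension with $A\in\ext$ and $A$ has at least one attacker, the set $\DefBy(A,\ext)$ of defenders of $A$ inside $\ext$ is itself sufficient for $A$ and satisfies $\DefBy(A,\ext)\subseteq\ext$. Indeed $\DefBy(A,\ext)$ is conflict-free (a subset of the conflict-free $\ext$, so none of its members attacks itself), relevant (each member defends $A$), and it defends $A$: for every attacker $C$ of $A$, admissibility of $\ext$ yields a member $B'\in\ext$ with $(B',C)\in\attack$, and this $B'$ directly defends $A$, hence $B'\in\DefBy(A)\cap\ext=\DefBy(A,\ext)$. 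This observation drives the ``nonempty'' halves of parts 2 and 3.

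For part 1 I would verify the two admissibility conditions for $\sfS$ and then for $\sfS\cup\{A\}$. Conflict-freeness of $\sfS$ is given, and adding $A$ preserves it: no member of $\sfS$ attacks $A$ (such a member would be an attacker of $A$ and hence attacked by $\sfS$, contradicting conflict-freeness), and $A$ attacks no member $B$ of $\sfS$ (otherwise $\sfS$, were it to defend $B$, would have to attack $A$, which we just excluded). The defence of $A$ is given by hypothesis; what remains is to show that $\sfS$ defends each of its \emph{own} members. I expect this to be the main obstacle, because defending $A$ says nothing, on its face, about the attackers of $A$'s defenders (a set can defend $A$ against all attackers of $A$ and still leave one of its own members undefended). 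Here the proof must lean on the self-contained reading of ``defends $A$ against all its attackers'', namely that a sufficient defence is closed under the defenders it recruits; I would make this recursion explicit and argue termination using relevance (no member attacks itself). Once $\sfS$ is admissible, admissibility of $\sfS\cup\{A\}$ is immediate, as $\sfS$ defends $A$ and every member of $\sfS$.

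For part 2 I would argue the two orientations separately, being careful about the empty set. If $A$ has an attacker, then since $A$ is accepted it lies in some admissible extension $\ext$, and the opening observation produces a nonempty $\DefBy(A,\ext)\in\Suff(A,\emptyset)$, so $\Suff(A,\emptyset)\neq\emptyset$. Conversely, if $A$ has no attacker, then no argument can (in)directly attack or defend $A$ — direct defence needs an attacker of $A$ to counter, indirect defence needs such a chain, and an indirect attack of $A$ needs a defender of $A$ to target, none of which exist — so no argument is relevant for $A$; the only relevant subset is therefore $\emptyset$, and since the empty (vacuous) defence is not counted as a sufficient explanation, $\Suff(A,\emptyset)=\emptyset$. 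I would flag explicitly that this direction rests on reading sufficiency as the provision of an actual, nonvacuous defence, which is precisely the content the proposition isolates.

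Parts 4 and 3 then follow, and I would prove 4 before 3. For part 4, given $B\in\Nec(A,\emptyset)$ and any $\sfS\in\Suff(A,\emptyset)$, part 1 makes $\sfS\cup\{A\}$ an admissible set containing $A$; necessity of $B$ forces $B\in\sfS\cup\{A\}$, and since $B\neq A$ in the generic case we conclude $B\in\sfS$, whence $B\in\bigcap\Suff(A,\emptyset)$ (the self-relevant case $B=A$, where $A$ (in)directly defends itself, I would note and dispatch separately). For part 3 I would prove both directions of the disjunction: if $A$ has no attacker there are no relevant arguments, so $\Nec(A,\emptyset)=\emptyset$, while if $\bigcap\Suff(A,\emptyset)=\emptyset$ then $\Nec(A,\emptyset)=\emptyset$ is immediate from part 4. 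For the converse I would show that if $A$ has an attacker and $\bigcap\Suff(A,\emptyset)\neq\emptyset$, then any $B\in\bigcap\Suff(A,\emptyset)$ is necessary: it is relevant (being a member of a sufficient set), and for every admissible $\ext$ with $A\in\ext$ the opening observation gives $\DefBy(A,\ext)\in\Suff(A,\emptyset)$ with $B\in\DefBy(A,\ext)\subseteq\ext$, so $A\in\ext$ implies $B\in\ext$; hence $B\in\Nec(A,\emptyset)$ and $\Nec(A,\emptyset)\neq\emptyset$.
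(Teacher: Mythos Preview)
Your overall strategy matches the paper's, and the device you front-load --- that $\DefBy(A,\ext)\in\Suff(A,\emptyset)$ whenever $\ext$ is admissible with $A\in\ext$ and $A$ is attacked --- is precisely what the paper isolates later as a separate proposition; using it to drive parts~2 and~3 is clean, and proving~4 before~3 so that one half of~3 becomes a corollary of~4 is a mild organisational improvement over the paper. Your caution about the empty set in part~2 and the $B=A$ case in part~4 is also more careful than the paper, which passes over both silently.

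The gap is in part~1. You correctly locate the crux --- showing that $\sfS$ defends each of its own members --- but your proposed fix (a ``self-contained reading'' in which a sufficient set is ``closed under the defenders it recruits'', to be unwound by recursion with termination via relevance) is neither what the definition says nor what is needed. The paper's route is direct and rests on one observation you are missing: in the paper's reading, ``$\sfS$ defends $A$ against all its attackers'' covers \emph{indirect} attackers in the sense of Definition~3. Since each $C\in\sfS$ (in)directly defends $A$, any $B$ with $(B,C)\in\attack$ is by Definition~3 an indirect attacker of $A$; sufficiency of $\sfS$ then immediately gives some $D\in\sfS$ with $(D,B)\in\attack$. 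No recursion or termination argument is required --- the chain is already encoded in the notion of indirect attack. The same device repairs your argument that $A$ attacks no $B\in\sfS$, which as written is circular (you invoke ``$\sfS$, were it to defend $B$'', but that is exactly what has not yet been shown): the paper instead observes that if $(A,B)\in\attack$ with $B$ a defender of $A$, then $A$ indirectly attacks $A$, so sufficiency forces some member of $\sfS$ to attack $A$, contradicting what you already established.
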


\begin{proof}
  Let \(\calAF = \tuple{\Args,\attack}\) be an AF and let \(A\in\Args\) be accepted w.r.t.\ some \(\sem\) and \(\star\in\{\cap,\cup\}\). 
  \begin{enumerate}
      \item Let \(\sfS\in\Suff(A,\emptyset)\). Note that, by definition, \(\sfS\) is conflict-free. If there is some \(B\in\sfS\) such that \((B,A)\in\attack\) then there is some \(C\in\sfS\) that defends against this attack (i.e., \((C,B)\in\attack\)), a contradiction. If, there is some \(B\in\sfS\) such that \((A,B)\in\attack\), then \(A\) indirectly attacks itself. Since there is no \(D\in\sfS\) such that \((D,A)\in\attack\) it follows that \(B\) is not defended against the attack from \(A\). A contradiction with the definition of \(\sfS\) that it defends \(A\) against all attackers. Hence \(\sfS\cup\{A\}\) is conflict-free. 
      
      Now suppose that there is some \(B\in\Args\) and some \(C\in\sfS\) such that \((B,C)\in\attack\). Since \(C\) (in)directly defends \(A\), \(B\) indirectly attacks \(A\). By definition of a sufficient set of arguments \(\sfS\) defends \(A\) against \(B\). It follows that there is some \(D\in\sfS\) such that \((D,B)\in\attack\). Hence \(\sfS\) defends \(A\) and all its own elements against any attacker. Therefore \(\{\sfS,\sfS\cup\{A\}\}\subseteq\exts_\adm(\calAF)\). 
      
      \item Suppose that \(\Suff(A,\emptyset) = \emptyset\), then there is no \(\sfS\subseteq\Args\) such that \(\sfS\) is relevant for \(A\) and defends \(A\) against all its arguments. Since \(A\) is accepted by assumption, it follows that \(A\) is not attacked at all. Now suppose that there is no \(B\in\Args\) such that \((B,A)\in\attack\). Then there is no \(\sfS\subseteq\Args\) that is relevant for \(A\) and hence \(\Suff(A,\emptyset) = \emptyset\). 
      
     \item First suppose that $\Nec(A,\emptyset) = \emptyset$. Then there is no argument relevant for $A$ (from which it follows that there is no $B\in\Args$ such that $(B,A)\in\attack$) or there is no $B\in\Args$ such that $B\in\bigcap\allext_A$. Note that for each $\sfS\in\Suff(A,\emptyset)$ there is some $\ext\in\allext_A$ such that $\sfS\subseteq\ext$. Since $\bigcap\allext_A = \emptyset$ it follows that $\bigcap\Suff(A,\emptyset) = \emptyset$ as well. 
   
     For the other direction suppose first that $A$ is not attacked at all, then there is no argument relevant for $A$ from which it follows that $\Nec(A,\emptyset) = \emptyset$. Now suppose that $\Suff(A,\emptyset) = \emptyset$. By assumption $A$ is accepted and $A$ is attacked, hence $\Suff(A,\emptyset)\neq\emptyset$. It follows that for each $\sfS\in\Suff(A,\emptyset)$ and for each $B\in\sfS$ there is an $\sfS'\in\Suff(A,\emptyset)$ such that $B\notin\sfS'$ and therefore also an $\ext\in\exts_\adm(\calAF)$ with $B\notin\ext$ but $A\in\ext$. Therefore none of the arguments is necessary: $\Nec(A,\emptyset) = \emptyset$. 
     
      \item In view of the above two items, suppose that \(A\) is attacked by some argument. Let \(B\in\Nec(A,\emptyset)\) and suppose that \(B\notin\bigcap\Suff(A,\emptyset)\). Then there is some \(\sfS\in\Suff(A,\emptyset)\) such that \(B\notin\sfS\). Note that \(\sfS\cup\{A\}\in\exts_\adm(\calAF)\). However, \(B\notin\sfS\), a contradiction with \(B\in\Nec(A,\emptyset)\).\qedhere
  \end{enumerate}   
\end{proof}

The next proposition relates the introduced notions of necessity and sufficiency with $\depth = \DefBy$. This shows that, although $\DefBy$ is only one of many options for $\depth$, it is closely related with these selection criteria and therefore a useful notion from which to start the investigation into explanations within the basic framework. 

\begin{restatable}{proposition}{propSuffNecDefBy}
    \label{prop:SuffNecDefBy}
    Let $\calAF = \tuple{\Args,\attack}$ be an AF and let $A\in\Args$ be accepted w.r.t.\ $\sem\in\{\adm,\cmp,\grd,\prf,\stb\}$ and $\star\in\{\cap,\cup\}$. Then:
    \begin{itemize}
        \item for all $\ext\in\allext_A$, $\DefBy(A,\ext)\in\Suff(A,\emptyset)$;
        \item $\bigcap_{\ext\in\allext_A}\DefBy(A,\ext) = \Nec(A,\emptyset)$. 
    \end{itemize}
\end{restatable}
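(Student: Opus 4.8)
The plan is to treat the two bullets separately, reading $\allext_A$ as the family of \emph{admissible} extensions containing $A$; this is legitimate because, by Remark~\ref{rem:SemanticsNecSuff}, the notions $\Suff$ and $\Nec$ do not depend on $\sem$, and every $\sem$-extension under consideration is admissible, so that the necessity condition of Definition~\ref{def:Acc:NecSuff} (quantified over $\exts_\adm(\calAF)$) is exactly matched. For the first bullet I fix $\ext\in\allext_A$ and verify that $\sfS := \DefBy(A,\ext) = \DefBy(A)\cap\ext$ meets the three requirements of Definition~\ref{def:Acc:NecSuff}: relevance, conflict-freeness, and defence of $A$ against all its attackers. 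Relevance and conflict-freeness are immediate, as noted after Definition~\ref{def:GenDefAtt}: every $B\in\sfS$ lies in $\DefBy(A)$ and so (in)directly defends $A$, and since $B\in\ext$ with $\ext$ conflict-free, $B$ does not attack itself; moreover $\sfS\subseteq\ext$ is conflict-free.

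The heart of the first bullet is the defence clause, for which I use that $\ext$ is admissible with $A\in\ext$. I would first isolate the following observation: if $D$ attacks some $Z$ and $(Z,t)\in\attack$ for an argument $t\in\DefBy(A)\cup\{A\}$, then $D\in\DefBy(A)$. Indeed, $D$ then directly defends $t$ by Definition~\ref{def:DefenseExtension}, and since $t$ is $A$ or (in)directly defends $A$, $D$ (in)directly defends $A$. Granting this, take any attacker $C$ of $A$; admissibility of $\ext$ yields $D\in\ext$ with $(D,C)\in\attack$, and the observation gives $D\in\DefBy(A)$, hence $D\in\DefBy(A)\cap\ext=\sfS$, so $\sfS$ attacks $C$. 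Applying the same observation to the attackers of the elements of $\sfS$ shows in addition that $\sfS\cup\{A\}$ defends each of its members, i.e.\ $\{\sfS,\sfS\cup\{A\}\}\subseteq\exts_\adm(\calAF)$, in agreement with Proposition~\ref{prop:Acc:NecSuff:SoundCompl}(1); thus $\sfS\in\Suff(A,\emptyset)$.

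For the second bullet I prove the two inclusions, and here the work of the first bullet pays off. For $\Nec(A,\emptyset)\subseteq\bigcap_{\ext\in\allext_A}\DefBy(A,\ext)$ I chain two facts I may assume: by Proposition~\ref{prop:Acc:NecSuff:SoundCompl}(4), $\Nec(A,\emptyset)\subseteq\bigcap\Suff(A,\emptyset)$, and by the first bullet $\{\DefBy(A,\ext)\mid\ext\in\allext_A\}\subseteq\Suff(A,\emptyset)$, so intersecting over this subfamily dominates intersecting over all of $\Suff(A,\emptyset)$, giving $\bigcap\Suff(A,\emptyset)\subseteq\bigcap_{\ext\in\allext_A}\DefBy(A,\ext)$. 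For the reverse inclusion I argue by a membership chase: if $B\in\bigcap_{\ext\in\allext_A}\DefBy(A,\ext)$ then $B\in\DefBy(A)$ (so $B$ (in)directly defends $A$ and, lying in a conflict-free extension, does not attack itself, hence is relevant for $A$) and $B\in\ext$ for every admissible $\ext$ with $A\in\ext$; the latter is precisely the condition that $A\in\ext$ implies $B\in\ext$ for all $\ext\in\exts_\adm(\calAF)$, so $B\in\Nec(A,\emptyset)$. Note that $\allext_A\neq\emptyset$ since $A$ is accepted, which is what makes the intersection and the ``does not attack itself'' step meaningful.

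I expect the main obstacle to be conceptual rather than computational: pinning down the intended reading of ``$\sfS$ defends $A$ against all its attackers'' so that it coincides with $\sfS\cup\{A\}$ being admissible, that is, defending against those indirect attackers of $A$ that reach $A$ \emph{through $\sfS$}, rather than against every indirect attacker of $A$. The isolated observation above is exactly what makes the defence clause go through while simultaneously yielding admissibility, and it is also what keeps the first inclusion of the second bullet aligned with Proposition~\ref{prop:Acc:NecSuff:SoundCompl}(1) and (4). A secondary point to flag is the degenerate situation: the first statement should be read for the case that $A$ is attacked, since by Proposition~\ref{prop:Acc:NecSuff:SoundCompl}(2) $\Suff(A,\emptyset)$ is empty exactly when $A$ has no attacker, whereas for the second statement the unattacked case is harmless because both sides then reduce to $\emptyset$.
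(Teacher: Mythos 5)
Your proof is correct. For the first bullet you follow essentially the paper's own argument: for an attacker $C$ of $A$, admissibility of $\ext\ni A$ yields a counter-attacker $D\in\ext$, your ``isolated observation'' places $D$ in $\DefBy(A)$, and relevance and conflict-freeness are inherited from $\ext$. Your extra step showing that $\DefBy(A,\ext)\cup\{A\}$ also defends its own members is not in the paper's proof of this proposition (it appears separately as Proposition~\ref{prop:Acc:NecSuff:SoundCompl}(1)), but it does no harm and supports your chosen reading of ``defends $A$ against all its attackers'', which matches how that phrase is actually used in Proposition~\ref{prop:Acc:NecSuff:SoundCompl}; under the stronger reading, on which $\sfS$ must counter-attack every indirect attacker of $A$, the first bullet would in fact be false, so your disambiguation is the right one. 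For the second bullet you genuinely diverge: the paper proves both inclusions directly, and its argument for $\Nec(A,\emptyset)\subseteq\bigcap_{\ext\in\allext_A}\DefBy(A,\ext)$ jumps from $B\notin\sfT$ to the existence of some $\ext\in\allext_A$ with $B\notin\ext$, silently discarding the case $B\notin\DefBy(A)$ (a priori a necessary argument could be a pure indirect attacker lying in every extension containing $A$). Your route via Proposition~\ref{prop:Acc:NecSuff:SoundCompl}(4) chained with your first bullet closes exactly this gap, since the admissible witness $\sfS\cup\{A\}$ behind that item is what excludes a necessary argument outside $\DefBy(A)$; what it costs is a dependence on the earlier proposition, which is legitimate here. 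Two of your side remarks are also substantive rather than pedantic: reading $\allext_A$ as $\allext_A^\adm$ is not merely permitted but needed --- if $A$'s only attacker is counter-attacked by two unattacked arguments $B$ and $C$, then for $\sem\in\{\grd,\cmp,\prf,\stb\}$ the unique $\sem$-extension containing $A$ contains both, so the per-semantics intersection is $\{B,C\}$, whereas $\Nec(A,\emptyset)=\emptyset$, witnessed by the admissible sets $\{A,B\}$ and $\{A,C\}$ --- and restricting the first bullet to attacked $A$ keeps it consistent with the convention of Proposition~\ref{prop:Acc:NecSuff:SoundCompl}(2), under which $\Suff(A,\emptyset)=\emptyset$ when $A$ has no attacker.
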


\begin{proof}
  Let $\calAF = \tuple{\Args,\attack}$ be an AF and let $A\in\Args$ be an argument that is accepted w.r.t.\ $\sem$ and $\star\in\{\cap,\cup\}$. Consider both items. 
  \begin{itemize}
      \item Since $A$ is accepted, there is some $\ext\in\exts_\sem(\calAF)$ such that $A\in\ext$. Let $\sfT = \DefBy(A,\ext)$. By definition, $\sfT$ is relevant for $A$ (i.e., all $B\in\sfT$ (in)directly defend $A$ and since \(B\in\ext\), \((B,B)\notin\attack\)). Now suppose that there is some $C\in\Args$ such that $C$ attacks $A$ and $A$ is not defended by $\sfT$. By assumption $A\in\ext$. Hence there is a $D\in\ext$ such that $(D,C)\in\attack$. But then $D$ (in)directly defends $A$ and therefore $D\in\sfT$. Thus $\sfT$ defends $A$ against all its attackers and therefore $\sfT\in\Suff(A,\emptyset)$.
      \item Let $\sfT = \bigcap_{\ext\in\allext_A}\DefBy(A,\ext)$, since $A$ is accepted, $\allext_A \neq\emptyset$. Suppose there is some $B\in\sfT$ which is not necessary for the acceptance of $A$. Then there is an $\ext\in\allext_A$ such that $B\notin\ext$. However, by definition of $\sfT$, $B\in\bigcap \allext_A$. Hence $\sfT$ is necessary for $A$. To see that $\sfT$ contains all the necessary arguments, assume it does not. Then there is some $B\in\Args$ such that $B\notin\sfT$ but $B$ is necessary for the acceptance of $A$. However, since $B\notin\sfT$, there is some $\ext\in\allext_A$ such that $B\notin\ext$, but $A\in\ext$. A contradiction.  \qedhere
  \end{itemize}
\end{proof}

\subsection{Necessity and Sufficiency for Non-Acceptance}
\label{sec:NecSuff:NonAcc}

When looking at the non-acceptance of an argument $A$, the acceptance of any of its direct attackers is a sufficient explanation. However, other arguments (e.g., some of the indirect attackers) might be sufficient as well. An argument is necessary for the non-acceptance of $A$, when it is relevant and $A$ is accepted in the argumentation framework without it. In what follows we will assume that $(A,A)\notin\attack$, since otherwise $A$ itself is the reason for its non-acceptance.

In order to define sufficiency for non-acceptance we need the following definition. 

\begin{definition}
  \label{def:ContestedArgs}
  Let $\calAF = \tuple{\Args,\attack}$ be an AF and let $A,B\in\Args$ such that $A$ indirectly attacks $B$, via $C_1,\ldots,C_n\in\Args$, i.e., $(A,C_1), (C_1,C_2), \ldots, (C_n, B)\in\attack$. It is said that the attack from $A$ on $B$ is \emph{uncontested} if there is no $D\in\Args$ such that $(D,C_{2i})\in\attack$ for $i\in\{1,\ldots,\frac{n}{2}\}$. It is \emph{contested} otherwise, in which case it is said that the attack from $A$ is contested in $C_{2i}$ and that $C_{2i}$ is the contested argument.
\end{definition}

This definition is needed since the acceptance of an indirect attacker might already be sufficient for the non-acceptance of an argument, but not every indirect attacker is sufficient for non-acceptance. See also the next example.

\begin{example}[Example~\ref{ex:Acc:NecSuff:Notation} continued]
  \label{ex:ContestedArgs}
  For $\calAF_1$ from Example~\ref{ex:abstractAF} we have that the indirect attack from $G$ on $B$ is uncontested. This follows since $E$ is not attacked and hence, when $G$ is accepted, so is $E$. Therefore, both $E$ and $G$ can be seen as sufficient for the non-acceptance of $B$. However, the attacks from $D$ and $F$ on $A$ are contested in $B$. For $D$ this follows since it defends $B$, but $B$ is attacked by $E$ and, similarly, $F$ defends $B$, but $B$ is attacked by $C$. Hence, although $D$ and $F$ indirectly attack $A$, by just accepting one, $A$ is not necessarily non-accepted, therefore neither would be sufficient on its own to make $A$ non-accepted.
\end{example}

For the definition of necessary for non-acceptance we define subframeworks, which are needed since an argument might be non-accepted since it is attacked by an accepted or by another non-accepted argument.\footnote{In terms of labeling semantics (see e.g.,~\cite{BCG18}) an argument is non-accepted if it is \texttt{out} (i.e., attacked by an \texttt{in} argument) or \texttt{undecided}.}

\begin{definition}
  \label{def:SubFrameworks}
  Let $\calAF = \tuple{\Args,\attack}$ be an AF and let $A\in\Args$. Then $\calAF_{\downarrow A} = \langle\Args \setminus\{A\}, \attack\cap(\Args\setminus\{A\}\times\Args\setminus\{A\})\rangle$ denotes the AF based on $\calAF$ but without $A$.  
\end{definition}

Since indirect attacks might be sufficient for not accepting an argument, but they also might be contested, the definition of sufficiency for non-acceptance is defined inductively. 

\begin{definition}
  \label{def:NonAcc:NecSuff}
  Let $\calAF = \tuple{\Args,\attack}$ be an AF and let $A\in\Args$ be non-accepted (w.r.t.\ $\sem$ and $\cup$ or $\cap$). Then:
  \begin{itemize}
      \item $\sfS\subseteq\Args$ is \emph{sufficient for the non-acceptance} of $A$ if $\sfS$ is relevant for $A$ and there is a $B\in\sfS$ such that:
    \begin{itemize}
        \item $(B,A)\in\attack$; or
        \item $B$ indirectly attacks $A$ and that attack is uncontested; or
        \item $B$ indirectly attacks $A$ and for every argument $C$ in which the attack from $B$ on $A$ is contested and every $D\in\Args$ such that $(D,C)\in\attack$, there is an $\sfS'\subseteq\sfS$ that is sufficient for the non-acceptance of $D$.
    \end{itemize}
      \item $B\in\Args$ is \emph{necessary for the non-acceptance} of $A$ if $B$ is relevant for $A$ and $A$ is accepted w.r.t.\ $\sem$ and $\cup$ resp.\ $\cap$ in $\calAF_{\downarrow B}$.
  \end{itemize}
\end{definition}

\begin{example}[Example~\ref{ex:ContestedArgs} continued]
  \label{ex:NonAcc:NecSuff}
  For $\calAF_1$ from Example~\ref{ex:abstractAF} we have that $B$ is both necessary and sufficient for the non-acceptance of $A$. Moreover, while $D$ and $F$ are neither sufficient for the non-acceptance of $A$, $\{D,F\}$ is. For the non-acceptance of $B$ we have that $C$, $E$ and $G$ are sufficient, but none of these is necessary. 
\end{example}

We define the following notation, to use the above notions as variations of $\depth$: 

\begin{itemize}
    \item $\SuffNot(A,\emptyset) = \{\sfS\subseteq\Args \mid \sfS \allowbreak\text{ is sufficient }\allowbreak\text{for }\allowbreak\text{the }\allowbreak\text{non-acceptance of }A\}$, denotes the set of sets of arguments that, when accepted, cause $A$ to be non-accepted;
    \item $\NecNot(A,\emptyset) = \{B\in\Args \mid B \allowbreak\text{ is necessary }\allowbreak\text{for }\allowbreak\text{the }\allowbreak\text{non-acceptance of }A\}$, denotes the set of all arguments that are necessary for $A$ not to be accepted.
\end{itemize}

\begin{example}[Example~\ref{ex:NonAcc:NecSuff} continued]
  \label{ex:NonAcc:NecSuff:sets}
  For $\calAF_1$ we have that 
  $\NotAcc^\cap_\prf(B) \in \{\{C\},\allowbreak\{E\},\allowbreak\{G\},\allowbreak \{C,E\}, \allowbreak \{C,G\}, \{E,G\}, \{C,E,G\}\}$\footnote{The explanation could contain other arguments as well (e.g., the explanation could be $\{A,C,D,E,G\}$). This is the case since $\sfS\in\Suff(B,\emptyset)$ is not assumed to be minimal.} for $\depth = \SuffNot$ while $\NotAcc^\cap_\prf(B) = \emptyset$ for $\depth = \NecNot$. This means that the skeptical non-acceptance of $B$ can be explained by the existence of the arguments $C$, $E$ and $G$, which are all sufficient for the non-acceptance of $B$ but none of them is necessary.
\end{example}

The next propositions are the non-acceptance counterparts of Propositions~\ref{prop:Acc:NecSuff:SoundCompl} and~\ref{prop:SuffNecDefBy}. First some basic properties of sufficiency and necessity for non-acceptance. 

\begin{proposition}
    \label{prop:NotAcc:NecSuff:Empty}
    Let $\calAF = \tuple{\Args,\attack}$ be an AF and let $A\in\Args$ be non-accepted w.r.t.\ $\sem\in\{\adm,\cmp,\grd,\prf,\stb\}$ and $\star\in\{\cap,\cup\}$. Then: 
    \begin{itemize} 
        \item $\SuffNot(A,\emptyset)\neq\emptyset$;
        \item $\NecNot(A,\emptyset) = \emptyset$ implies that there are at least two direct attackers of $A$.
    \end{itemize}
\end{proposition}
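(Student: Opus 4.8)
The plan is to prove the two items separately, reducing both to one auxiliary observation: a non-accepted argument must have a direct attacker. First I would establish this fact. If $A$ had no direct attacker, then being unattacked it is defended vacuously by every set, so $A$ lies in the grounded extension and in every complete extension; hence $A$ is both skeptically and credulously accepted under each of $\cmp,\grd,\prf,\stb$ (and credulously accepted under $\adm$, since $\{A\}$ is admissible because $(A,A)\notin\attack$). This contradicts $A$ being non-accepted, so there is some $B\in\Args$ with $(B,A)\in\attack$. Choosing such a $B$ that does not attack itself, $B$ is relevant for $A$ by Definition~\ref{def:relevance}, and the singleton $\{B\}$ satisfies the first clause of Definition~\ref{def:NonAcc:NecSuff} (namely $(B,A)\in\attack$). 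Therefore $\{B\}\in\SuffNot(A,\emptyset)$, which proves $\SuffNot(A,\emptyset)\neq\emptyset$.

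For the second item I would argue the contrapositive: assuming $A$ has at most one direct attacker, I would show $\NecNot(A,\emptyset)\neq\emptyset$. By the auxiliary fact $A$ has at least one direct attacker, so it has exactly one, say $B$. I claim $B\in\NecNot(A,\emptyset)$. As above $B$ is relevant for $A$, so it remains to check that $A$ is accepted (w.r.t.\ $\sem$ and $\star$) in $\calAF_{\downarrow B}$ of Definition~\ref{def:SubFrameworks}. Removing $B$ deletes the edge $(B,A)$, and the direct attackers of $A$ in $\calAF_{\downarrow B}$ are exactly those of $\calAF$ other than $B$, of which there are none; hence $A$ is unattacked in $\calAF_{\downarrow B}$ and, by the same reasoning as in the auxiliary fact, is accepted there. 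Thus $B$ is necessary for the non-acceptance of $A$, giving $\NecNot(A,\emptyset)\neq\emptyset$, the contrapositive of the claim.

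The hard part is not the main line but the degenerate cases hidden inside the auxiliary fact. First, under skeptical admissibility ($\sem=\adm$, $\star=\cap$) the empty set is always an extension, so $\bigcap\exts_\adm(\calAF)=\emptyset$ and no argument is ever skeptically accepted; ``non-accepted'' is then vacuously true and does \emph{not} force $A$ to be attacked, so this case has to be set aside (or the statement read for the complete-based semantics together with credulous admissibility). Second, the relevance requirement forbids using a self-attacking direct attacker: if every direct attacker of $A$ attacks itself, then no singleton is relevant and one must separately verify whether any relevant (in)direct attacker exists at all, since otherwise $\SuffNot(A,\emptyset)$ can genuinely be empty. I would handle these by restricting attention to frameworks in which $A$ possesses a relevant direct attacker, and checking that both the construction of $\{B\}$ and the passage to $\calAF_{\downarrow B}$ go through exactly in that situation. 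The remaining steps — that an unattacked argument is accepted, and that removing the unique attacker leaves $A$ unattacked — are routine.
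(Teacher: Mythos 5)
Your argument follows essentially the same route as the paper's own proof: for the first item the paper likewise derives a direct attacker of $A$ from non-acceptance (using the section-wide assumption $(A,A)\notin\attack$) and concludes $\SuffNot(A,\emptyset)\neq\emptyset$ via the first clause of Definition~\ref{def:NonAcc:NecSuff}; for the second item the paper assumes $\NecNot(A,\emptyset)=\emptyset$ with a single direct attacker $B_1$ and derives a contradiction by observing that $A$ is unattacked in $\calAF_{\downarrow B_1}$ and hence accepted there --- which is exactly your contrapositive, phrased as a reductio. So in substance the two proofs coincide.

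The genuinely valuable difference is that the degenerate cases you flag are real, and the paper's proof silently steps over them. Its inference ``no relevant set attacking $A$ exists, hence no $B$ with $(B,A)\in\attack$'' fails when every attacker of $A$ is self-attacking: in $\calAF=\tuple{\{A,B\},\{(B,B),(B,A)\}}$ the only complete extension is $\emptyset$, so $A$ is non-accepted and $(A,A)\notin\attack$, yet no argument is relevant for $A$, so $\SuffNot(A,\emptyset)=\emptyset$ (falsifying the first item) and $\NecNot(A,\emptyset)=\emptyset$ with only one direct attacker (falsifying the second item, since the paper's contradiction needs $B_1$ to be relevant, i.e.\ not self-attacking). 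Likewise, for $\sem=\adm$ with $\star=\cap$ the contradiction ``non-accepted but unattacked'' evaporates, because $\emptyset\in\exts_\adm(\calAF)$ makes every argument skeptically non-accepted; a similar vacuity arises for $\stb$ with $\star=\cup$ when $\exts_\stb(\calAF)=\emptyset$, a case neither you nor the paper mentions but which your restriction would also have to exclude. So your proposal is not merely correct modulo caveats: the restriction to frameworks in which $A$ has a relevant direct attacker (and to semantics/strategy pairs where unattacked arguments are accepted) is a correction the proposition itself needs, and with it made explicit your proof is sound and, where it overlaps with the paper's, identical in approach.
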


\begin{proof}
  Let $\calAF = \tuple{\Args,\attack}$ be an AF and let $A\in\Args$ be an argument that is not accepted w.r.t.\ $\sem\in\{\adm,\allowbreak \cmp,\allowbreak \grd,\allowbreak \prf,\allowbreak \stb\}$ and $\star\in\{\cap,\cup\}$. 
  \begin{itemize} 
    \item Suppose that $\Suff(A,\emptyset) = \emptyset$. Then there is no $\sfS\subseteq\Args$ that is relevant for $A$ and in which $B\in\Args$ (in)directly attacks $A$. It follows that there is no $B\in\Args$ such that $(B,A)\in\attack$. A contradiction with the assumption that $A$ is non-accepted and that $(A,A)\notin\attack$. 
    \item It follows that there are $B_1,\ldots,B_n\in\Args$ such that $(B_1,A),\ldots, (B_n,A)\in\attack$. Assume that $\NecNot(A,\emptyset) = \emptyset$ but that $n=1$. Since by assumption in this section $(A,A)\notin\attack$, it follows that $A$ is not attacked in $\calAF_{\downarrow B_1}$ and should therefore be accepted in any complete extension. Hence $n\geq 2$. 
    \qedhere
  \end{itemize}
\end{proof}

Now we show how $\depth = \NotDef$ is related to the here introduced notions of sufficiency and necessity for non-acceptance. For this we first need the following lemma:\footnote{For $\sem = \grd$ this lemma was shown in~\cite{BKT09}.}
 
\begin{restatable}{lemma}{LemRemAttacked}
    \label{lem:RemAttacked}
    Let \(\calAF = \tuple{\Args,\attack}\), \(\ext\in\exts_\sem(\calAF)\) for some \(\sem\in\{\adm,\cmp,\grd,\prf,\stb\}\) and \(A\in\Args\). If there is a \(B\in\ext\) such that \((B,A)\in\attack\), then \(\ext\in\exts_\sem(\calAF_{\downarrow A})\).
\end{restatable}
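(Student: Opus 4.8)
The plan is to show that $\ext$, viewed as a subset of $\Args\setminus\{A\}$, satisfies the defining conditions of $\sem$ in $\calAF_{\downarrow A}$. First I would record two preliminary observations. Since $B\in\ext$, $(B,A)\in\attack$ and $\ext$ is conflict-free, we have $A\notin\ext$, so $\ext\subseteq\Args\setminus\{A\}$ is a legitimate candidate extension of $\calAF_{\downarrow A}$. Moreover, by Definition~\ref{def:SubFrameworks} the attack relation of $\calAF_{\downarrow A}$ is that of $\calAF$ restricted to $\Args\setminus\{A\}$; hence for any $C\neq A$ the attackers of $C$ in $\calAF_{\downarrow A}$ are exactly its attackers in $\calAF$ other than $A$, and any attack whose source lies in $\ext$ and whose target is not $A$ belongs to both frameworks (as $A\notin\ext$). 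The one extra ingredient used repeatedly is that $\ext$ attacks $A$ in $\calAF$, witnessed by $B$.

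Conflict-freeness transfers immediately: deleting edges cannot create a conflict and all members of $\ext$ survive. For admissibility, take $C\in\ext$ and any attacker $D\neq A$ of $C$ in $\calAF_{\downarrow A}$; it also attacks $C$ in $\calAF$, so some $E\in\ext$ attacks $D$, and since $E\neq A$ and $D\neq A$ this defence persists in $\calAF_{\downarrow A}$. Thus $\ext\in\exts_\adm(\calAF_{\downarrow A})$, which settles $\sem=\adm$.

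For $\sem=\cmp$, suppose $\ext$ defends some $C\neq A$ in $\calAF_{\downarrow A}$. In $\calAF$ the attackers of $C$ are those inherited from $\calAF_{\downarrow A}$, against which $\ext$ defends by persistence, together with possibly $A$, against which $\ext$ defends via $B$; hence $\ext$ defends $C$ in $\calAF$, and completeness of $\ext$ in $\calAF$ gives $C\in\ext$. For $\sem=\stb$, $\ext$ attacks every $C\in\Args\setminus\ext$ in $\calAF$; when $C\neq A$ the attacker lies in $\ext$ and the attack survives, so $\ext$ attacks every argument of $\calAF_{\downarrow A}$ outside $\ext$, and with completeness this makes $\ext$ stable in $\calAF_{\downarrow A}$.

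The two cases that need a little more are $\prf$ and $\grd$, since they involve maximality and minimality. For $\prf$, assume for contradiction a complete extension $\ext''$ of $\calAF_{\downarrow A}$ with $\ext\subseteq\ext''$ and $\ext\neq\ext''$. Transported to $\calAF$, $\ext''$ is conflict-free (as $A\notin\ext''$) and defends each of its elements---against attackers $\neq A$ by persistence and against $A$ because $B\in\ext\subseteq\ext''$ attacks $A$---so it is admissible in $\calAF$ and, embedding it in a complete extension of $\calAF$ as in the proof of Proposition~\ref{prop:Acc:Semantics}, contradicts the maximality of the preferred extension $\ext$. For $\grd$, writing $\ext$ as the least fixed point $\bigcup_{i\geq 0}F_{\calAF}^i(\emptyset)$ of the characteristic function $F_{\calAF}(\sfS)=\{C\mid\sfS\text{ defends }C\text{ in }\calAF\}$, note that $A$ never enters the iteration (it would conflict with $B$), and a short induction---each defender persisting into $\calAF_{\downarrow A}$ and the grounded extension $\ext'$ of $\calAF_{\downarrow A}$ being complete there---yields $F_{\calAF}^i(\emptyset)\subseteq\ext'$ for all $i$, hence $\ext\subseteq\ext'$; the reverse inclusion follows since $\ext$ is complete in $\calAF_{\downarrow A}$ and $\ext'$ is its least complete extension, giving $\ext=\ext'$. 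I expect the grounded case to be the main obstacle, as it is the only one that cannot be handled by directly transporting defences and instead needs the fixpoint induction together with the observation that $A$ is absent from every stage.
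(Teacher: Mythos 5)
Your proof is correct, and for $\adm$, $\cmp$ and $\stb$ it follows essentially the same route as the paper's: $A\notin\ext$ by conflict-freeness, admissibility persists because every defender lies in $\ext$ and so survives the deletion of $A$, and completeness persists because any argument defended by $\ext$ in $\calAF_{\downarrow A}$ is also defended in $\calAF$ (against the possible extra attacker $A$ via $B$), so no newly defended arguments appear. The differences lie at the two extremal semantics, and they work in your favour. For $\prf$, the paper merely asserts that maximal completeness transfers; you actually prove it, by transporting a hypothetical strictly larger complete extension $\ext''$ of $\calAF_{\downarrow A}$ back to $\calAF$ (conflict-free there since $A\notin\ext''$, and defended against $A$ because $B\in\ext\subseteq\ext''$) and contradicting maximality of $\ext$ --- this back-transport is the genuinely non-trivial half of the preferred case, since forward persistence alone does not exclude new complete supersets in the subframework. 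For $\grd$, the paper gives no argument at all and instead defers to the literature (a footnote citing~\cite{BKT09}); your fixpoint induction, showing $F_{\calAF}^i(\emptyset)\subseteq\ext'$ for all $i$ (using that $A$ never enters any stage and that each defending attack survives into $\calAF_{\downarrow A}$), combined with $\ext$ being complete in $\calAF_{\downarrow A}$ and $\ext'$ being the $\subseteq$-least complete extension there, makes the lemma self-contained; you are also right that the simple transport argument fails here, since $B$ need not belong to $\ext'$. One small caveat: the characterization $\ext=\bigcup_{i\geq 0}F_{\calAF}^i(\emptyset)$ is valid as written for finitary frameworks, and in general the iteration must be transfinite --- but your induction passes through limit stages (unions) unchanged, so nothing breaks.
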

  
 \begin{proof}
  Let \(\calAF = \tuple{\Args,\attack}\), \(\ext\in\exts_\sem(\calAF)\) for some \(\sem\) and \(A,B\in\Args\) such that \(B\in\ext\) and \((B,A)\in\attack\). Note that \(\ext\) is still admissible in \(\calAF_{\downarrow A}\) since no new attacks are added.
  
  \(\sem \in\{\cmp,\prf\}\). Now suppose there is some \(C\in\Args\) such that \(C\notin\ext\) but \(C\) is defended by \(\ext\) in \(\calAF_{\downarrow A}\). If \(C\) is not attacked at all in \(\calAF_{\downarrow A}\), since \(C\notin\ext\), \((A,C)\in\attack\), but then \(\ext\) defends \(C\) in \(\calAF\), a contradiction. Hence there is some \(D\in\Args\) such that \((D,C)\in\attack\) and \(\ext\) defends against this attack in \(\calAF_{\downarrow A}\), but then \(\ext\) would defend \(C\) in \(\calAF\) as well. Again a contradiction. Hence \(\ext\) is complete in \(\calAF_{\downarrow A}\) and if \(\ext\) was maximally complete in \(\calAF\) it is still maximally complete in \(\calAF_{\downarrow A}\).
  
  \(\sem = \stb\). Any argument, other than \(A\), attacked by \(\ext\) is still attacked by \(\ext\) in \(\calAF_{\downarrow A}\). Since \(\ext\) is still complete, it follows that \(\ext\) is also still stable. 
\end{proof}

\begin{restatable}{proposition}{NotDefNecSuff}
    \label{prop:NotDef:NecSuff}
    Let $\calAF = \tuple{\Args,\attack}$ be an AF and let $A\in\Args$ be an argument that is not accepted w.r.t.\ $\sem\in\{\cmp,\grd,\prf,\stb\}$ and $\star\in\{\cap,\cup\}$. Then:
    \begin{itemize}
        \item for all $\ext\in\exts_\sem(\calAF)$ such that $A\notin\ext$, $\NotDef(A,\ext) \in\SuffNot(A,\emptyset)$;
        \item $\NecNot(A,\emptyset)\subseteq\bigcap_{\ext\in\allext_{\not A}}\NotDef(A,\ext)$.
    \end{itemize}
\end{restatable}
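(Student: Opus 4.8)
The plan is to treat the two items separately, using that every $\sem \in \{\cmp,\grd,\prf,\stb\}$ returns \emph{complete} extensions and that Lemma~\ref{lem:RemAttacked} controls which extensions survive the deletion of an argument.

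\textbf{Sufficiency (first item).} I fix $\ext \in \exts_\sem(\calAF)$ with $A \notin \ext$ and set $\sfS = \NotDef(A,\ext)$. Every element of $\sfS$ (in)directly attacks $A$, so $\sfS$ is relevant for $A$ (as already observed, $\NotDef$-sets are relevant), which gives the first requirement of $\SuffNot$. For the second requirement I only need one $B \in \sfS$ fitting a clause of the sufficiency definition, and the cheapest is the first clause, $(B,A)\in\attack$: since $\ext$ is complete and $A \notin \ext$, $\ext$ cannot defend $A$ (otherwise completeness forces $A \in \ext$), so $A$ has an attacker $C$ with $(C,A)\in\attack$ that $\ext$ does not counterattack. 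Then $C \in \NotDef(A,\ext) = \sfS$ and $(C,A)\in\attack$, so $\sfS \in \SuffNot(A,\emptyset)$. This is exactly where $\adm$ must be excluded: an admissible $\ext$ omitting $A$ may still defend $A$, so no such direct attacker $C$ need survive.

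\textbf{Necessity (second item).} I take $B \in \NecNot(A,\emptyset)$ and any $\ext \in \allext_{\not A}$, so $\ext \in \exts_\sem(\calAF)$ and $A \notin \ext$, and I must show $B \in \NotDef(A,\ext)$, that is (i) $B$ (in)directly attacks $A$, and (ii) $\ext$ does not attack $B$. The engine for (ii) is the contrapositive of Lemma~\ref{lem:RemAttacked}: if $\ext$ attacked $B$ then $\ext \in \exts_\sem(\calAF_{\downarrow B})$. By definition of necessity $A$ is accepted in $\calAF_{\downarrow B}$ under the corresponding strategy; for the skeptical strategy this means $A \in \ext'$ for \emph{every} $\ext' \in \exts_\sem(\calAF_{\downarrow B})$, whence $A \in \ext$, contradicting $A \notin \ext$. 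Thus $\ext$ does not attack $B$. For (i) I would argue that $B$ cannot be a pure defender of $A$: deleting a defender can only reduce the defense available to $A$, so if removing $B$ turns $A$ into an accepted argument then $B$ must have been (in)directly attacking $A$; relevance of $B$ then forces it to be an (in)direct attacker rather than merely a defender, completing $B \in \NotDef(A,\ext)$.

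\textbf{Main obstacle.} The delicate point is the credulous ($\cup$) case of (ii). There, acceptance of $A$ in $\calAF_{\downarrow B}$ only guarantees that $A$ lies in \emph{some} extension of $\calAF_{\downarrow B}$, which does not let me conclude that a particular $\ext \in \allext_{\not A}$ attacking $B$ fails to be an extension of $\calAF_{\downarrow B}$, so the contrapositive of Lemma~\ref{lem:RemAttacked} yields nothing. The whole argument really rests on $A$ being \emph{skeptically} accepted in $\calAF_{\downarrow B}$, i.e.\ $A \in \bigcap \exts_\sem(\calAF_{\downarrow B})$. I would therefore check carefully whether the intended definition of necessity supplies this strength under both strategies; if only credulous acceptance in $\calAF_{\downarrow B}$ is available, the inclusion can genuinely fail (a small framework in which some extension not containing $A$ already attacks the necessary attacker $B$ witnesses this), and the statement should be read, or restricted, accordingly. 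A secondary and more routine obstacle is making the pure-defender exclusion in (i) fully rigorous, since the lemmas at hand speak about deleting \emph{attacked} arguments rather than about deleting defenders.
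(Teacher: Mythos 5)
Your proof of the first item and of the skeptical half of the second item is exactly the paper's: relevance of $\NotDef(A,\ext)$ holds by construction, a \emph{direct} attacker in $\NotDef(A,\ext)$ is extracted from the completeness of $\ext$ (and you are right that this is precisely why $\adm$ is excluded from this proposition, unlike Propositions~\ref{prop:Acc:NecSuff:SoundCompl} and~\ref{prop:SuffNecDefBy}), and necessity is handled by the contrapositive of Lemma~\ref{lem:RemAttacked} applied to an $\ext\in\allext_{\not A}$ that attacks $B$. So as a reconstruction of the paper's argument your proposal is on target.

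The two obstacles you flag, however, are not weaknesses of your write-up but gaps in the paper's own proof, and the first one is fatal to the statement as written. The paper concludes from $\ext\in\exts_\sem(\calAF_{\downarrow B})$ ``a contradiction with $B\in\NecNot(A,\emptyset)$'' without splitting on $\star$; as you observe, this is a contradiction only when necessity supplies \emph{skeptical} acceptance in $\calAF_{\downarrow B}$, i.e.\ for $\star=\cap$. For $\star=\cup$ the inclusion genuinely fails, confirming your suspicion. Take $\calAF' = \tuple{\{A,B_1,B_2,C,E\},\{(B_1,A),(B_2,A),(C,B_1),(E,B_2),(C,E),(E,C)\}}$. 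Then $\exts_\prf(\calAF')=\exts_\stb(\calAF')=\{\{C,B_2\},\{E,B_1\}\}$, so $A$ is non-accepted w.r.t.\ $\cup$; deleting $B_1$ gives the preferred (and stable) extension $\{E,A\}$ of $\calAF'_{\downarrow B_1}$, so $B_1$ is relevant for $A$ and $A$ is credulously accepted in $\calAF'_{\downarrow B_1}$, whence $B_1\in\NecNot(A,\emptyset)$; yet $\{C,B_2\}$ attacks $B_1$, so $B_1\notin\NotDef(A,\{C,B_2\})$ --- indeed $\NotDef(A,\{C,B_2\})=\{B_2,C\}$ and $\NotDef(A,\{E,B_1\})=\{B_1,E\}$, so the intersection is empty while $\NecNot(A,\emptyset)\supseteq\{B_1,B_2\}$. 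The second item therefore has to be restricted to $\star=\cap$, or necessity strengthened to skeptical acceptance in $\calAF_{\downarrow B}$, exactly as you propose. Your secondary obstacle is likewise glossed in the paper, whose proof simply asserts ``$B$ is relevant for $A$ and thus (in)directly attacks $A$'', ignoring pure defenders. Your monotonicity idea can be made rigorous for the credulous reading: given an admissible $\ext'\ni A$ of $\calAF_{\downarrow B}$, the set $\{A\}\cup\{C\in\ext'\mid C \text{ (in)directly defends } A\}$ is still admissible, and since a pure defender $B$ attacks neither $A$ nor any defender of $A$, it remains admissible in $\calAF$. For $\star=\cap$, though, the subcase in which the witness extension $\ext$ neither attacks $B$ nor lets the lemma apply still needs its own argument (particularly for $\stb$, which lacks directionality), so even the corrected statement deserves more care there than either you or the paper provides.
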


\begin{proof}
  Let $\calAF = \tuple{\Args,\attack}$ be an AF and let $A\in\Args$ be non-accepted w.r.t.\ $\sem\in\{\cmp,\allowbreak \grd,\allowbreak \prf,\allowbreak \stb\}$ and $\star\in\{\cap,\cup\}$. Consider both items:
    \begin{itemize}
        \item By definition of \(\NotDef\), \(\sfT = \NotDef(A,\ext)\) is relevant for \(A\). We show that there is a \(B\in\sfT\) such that \((B,A)\in\attack\). Suppose there is no such \(B\), then \(A\) is not attacked at all or \(\ext\) defends \(A\) against all its direct attackers and therefore against all its attackers, both are a contradiction with the completeness of \(\ext\). Hence there is such a \(B\in\sfT\). From which it follows that \(\NotDef(A,\ext)\in\SuffNot(A,\emptyset)\). 
        \item Let \(B\in\NecNot(A,\emptyset)\) and suppose that \(B\notin\bigcap_{\ext\in\allext_{\not A}}\NotDef(A,\emptyset)\). Then there is some \(\ext\in\allext_{\not A}\) such that \(B\notin\NotDef(A,\ext)\). By assumption, \(B\) is relevant for \(A\) and thus (in)directly attacks \(A\). From which it follows that there is some \(C\in\ext\) such that \((C,B)\in\attack\). By Lemma~\ref{lem:RemAttacked}, \(\ext\in\exts_\sem(\calAF_{\downarrow B})\), a contradiction with the assumption that \(B\in\NecNot(A,\emptyset)\). \qedhere
    \end{itemize}
\end{proof}


\subsection{Necessity, Sufficiency and Minimality}

In order to compare the introduced notions of necessity and sufficiency with the notions of minimality known from~\cite{FaTo15AAAI} and recalled in Section~\ref{sec:Minimality}, we define minimal sufficient sets, where $\preceq\in\{\subseteq,\leq\}$:

\begin{itemize}
    \item $\MinSuff^\preceq(A,\emptyset) = \{\sfS\in\Suff(A,\emptyset)\mid \nexists\sfS'\in\Suff(A,\emptyset)\allowbreak\text{ such that } \sfS'\preceq\sfS\}$, denotes the set of all $\preceq$-minimally sufficient sets for the acceptance of $A$.
    \item $\MinSuffNot^\preceq(A,\emptyset) = \{\sfS\in\SuffNot(A,\emptyset)\mid \nexists\sfS'\in\SuffNot(A,\emptyset)\allowbreak\text{ such that } \sfS'\preceq\sfS\}$, denotes the set of all $\preceq$-minimally sufficient sets for the non-acceptance of $A$.
\end{itemize}

Although the notions of minimality are aimed at reducing the size of an explanation, by applying instead the notions of sufficiency and necessity as introduced in this paper, the size of the explanation can be further reduced. 
To see this, consider the following example:

\begin{example}
  \label{ex:Acc:NecSuffMin}
  Let $\calAF_2  = \tuple{\Args_2,\attack_2}$, shown in Figure~\ref{fig:Acc:NecSuffMin}. Here we have that $\exts_\prf(\calAF_2) = \{\{A,B\}, \{C,D\}\}$ and, for $\star\in\{\cap,\cup\}$, $\depth^\acc = \DefBy$ and $\depth^\nacc = \NotDef$, that $\Acc^\star_\prf(B) = \{A,B\}$, $\Acc^\star_\prf(D) = \{C,\allowbreak D\}$, $\NotAcc^\star_\prf(B) = \{C,D\}$ and $\NotAcc^\star_\prf(D) = \{A,\allowbreak B\}$. These are the explanations for $B$ and $D$, whether as defined in Section~\ref{sec:Basic} or as in Section~\ref{sec:Minimality}.  
  \begin{figure}[ht]
    \centering
    \begin{tikzpicture}
    \node [draw,circle] at (-2,0) (A) {$A$};
    \node [draw, circle] at (2,0) (B) {$B$};
    \node[draw,circle] at (0,0) (D) {$C$};
    \node[draw,circle] at (4,0) (E) {$D$};
    
    \draw[->] (A) to (D);
    \draw[->] (D) to [bend left=25] (A);
    \draw[->] (A) to [bend left=20] (E);
    \draw[->] (D) to (B);
    \draw[->] (E) to [bend left=25] (B);
    \draw[->] (B) to (E);
    \end{tikzpicture}
    \caption{Graphical representation of $\calAF_2$.}
    \label{fig:Acc:NecSuffMin}
  \end{figure}
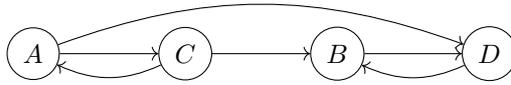
  
  When looking at sufficient sets, we have that $\MinSuff(B,\emptyset) = \{\{A\}\}$, $\MinSuff(D,\emptyset) = \{\{C\}\}$, $\MinSuffNot(B,\emptyset) = \{\{C\}\}$ and $\MinSuffNot(D,\emptyset) = \{\{A\},\allowbreak\{B\}\}$. Therefore, when $\depth^\acc = \MinSuff$ and $\depth^\nacc = \MinSuffNot$, $\Acc^\cup_\prf(B) = \{A\}$ and $\NotAcc^\cap_\prf(D) \in \{\{A\},\allowbreak \{B\}\}$. To see that these explanations are still meaningful, note that $A$ defends $B$ against all of its arguments and as soon as $A$ is accepted under complete semantics, $B$ will be accepted as well. Thus, the minimally sufficient explanations for $B$ and $D$ are $\leq$- and $\subseteq$-smaller than the minimal explanations for $B$ and $D$ from Section~\ref{sec:Minimality}. 
\end{example}

That minimally sufficient explanations can be smaller than minimal explanations is formalized in the next propositions. For this let $\texttt{Acc}^{\star,\preceq}_\sem(A)$ [resp.\ $\texttt{NotAcc}^{\star,\preceq}_\sem(A)$] denote the set of all $\Acc^\star_\sem(A)$ [resp.\ $\NotAcc^\star_\sem(A)$] explanations for $\mima = \min^\preceq$, as defined in Section~\ref{sec:Minimality}. Similarly, let $\texttt{AccS}^{\star,\preceq}_\sem(A)$ [resp.\ $\texttt{NotAccS}^{\star,\preceq}_\sem(A)$] denote the set of all $\Acc^\star_\sem(A)$ [resp.\ $\NotAcc^\star_\sem(A)$] explanations for $\depth = \MinSuff$ [resp.\ $\depth = \MinSuffNot$]. 

\begin{proposition}
    \label{prop:Acc:NecSuffMin}
    Let $\calAF = \tuple{\Args,\attack}$ be an AF, let $A\in\Args$ be accepted w.r.t.\ $\sem\in\{\adm,\allowbreak \cmp,\allowbreak \grd,\allowbreak \prf,\allowbreak \stb\}$ and $\star\in\{\cap,\cup\}$ and let $\preceq\in\{\subseteq,\leq\}$. Then, for $\depth = \DefBy$:
    \begin{itemize}
        \item for every $\sfS\in\texttt{Acc}^{\star,\preceq}_\sem(A)$ there is an $\sfS'\in\texttt{AccS}^{\star,\preceq}_\sem(A)$ such that $\sfS'\preceq\sfS$;
        \item where $\sem = \adm$, for every $\sfS\in\texttt{AccS}^{\star,\preceq}_\sem(A)$ also $\sfS\in\texttt{Acc}^{\star,\preceq}_\sem(A)$;
        \item for all $\sfS\in\texttt{Acc}^{\star,\preceq}_\sem(A)$, $\Nec(A,\emptyset) \subseteq\sfS$. 
    \end{itemize}
\end{proposition}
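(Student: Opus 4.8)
The plan is to reduce all three items to the two correspondences of Proposition~\ref{prop:SuffNecDefBy} — that each $\DefBy$-explanation is a sufficient set, and that $\Nec(A,\emptyset)$ is exactly the intersection of all $\DefBy$-explanations — together with Proposition~\ref{prop:Acc:NecSuff:SoundCompl}(1), which turns a sufficient set $\sfS$ into the admissible extension $\sfS\cup\{A\}$. Throughout, since $A$ is accepted, every $\sfS\in\texttt{Acc}^{\star,\preceq}_\sem(A)$ has the form $\DefBy(A,\ext_0)$ for some $\ext_0\in\exts_\sem(\calAF)$ with $A\in\ext_0$ (for $\star=\cap$ this uses that skeptical acceptance places $A$ in every extension); as each such $\ext_0$ is admissible, $\ext_0\in\allext_A$, the family of admissible extensions containing $A$ that appears in Proposition~\ref{prop:SuffNecDefBy}. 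By Remark~\ref{rem:SemanticsNecSuff}, applied to $\MinSuff$, I also treat $\texttt{AccS}^{\star,\preceq}_\sem(A)$ as $\MinSuff^\preceq(A,\emptyset)$, independent of $\sem$ and $\star$.

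First I would settle the first and third items, which are immediate. For the first, fix $\sfS\in\texttt{Acc}^{\star,\preceq}_\sem(A)$ and write $\sfS=\DefBy(A,\ext_0)$ with $\ext_0\in\allext_A$; by the first bullet of Proposition~\ref{prop:SuffNecDefBy}, $\sfS\in\Suff(A,\emptyset)$. As $\calAF$ is finite, $\Suff(A,\emptyset)$ is finite and nonempty, so it has a $\preceq$-minimal element $\sfS'\preceq\sfS$ (for $\preceq=\subseteq$, descend to a $\subseteq$-minimal set below $\sfS$; for $\preceq=\leq$, take any set of minimum cardinality, whose size is at most $|\sfS|$), and $\sfS'\in\MinSuff^\preceq(A,\emptyset)=\texttt{AccS}^{\star,\preceq}_\sem(A)$. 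For the third item, the second bullet of Proposition~\ref{prop:SuffNecDefBy} gives $\Nec(A,\emptyset)=\bigcap_{\ext\in\allext_A}\DefBy(A,\ext)$; since any $\sfS\in\texttt{Acc}^{\star,\preceq}_\sem(A)$ equals $\DefBy(A,\ext_0)$ for some $\ext_0\in\allext_A$, this intersection is contained in $\sfS$, i.e.\ $\Nec(A,\emptyset)\subseteq\sfS$.

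The second item is the substantive one. Fix $\sfS\in\texttt{AccS}^{\star,\preceq}_\adm(A)=\MinSuff^\preceq(A,\emptyset)$. By Proposition~\ref{prop:Acc:NecSuff:SoundCompl}(1), $\ext:=\sfS\cup\{A\}\in\exts_\adm(\calAF)$ with $A\in\ext$, so $\ext\in\allext_A$. The key step is to verify the identity $\DefBy(A,\ext)=\sfS$, that is, $(\sfS\cup\{A\})\cap\DefBy(A)=\sfS$. Granting this, $\sfS$ is itself a $\DefBy$-explanation, and it is $\preceq$-minimal among them: if some $\DefBy(A,\ext')$ with $\ext'\in\allext_A$ were strictly $\preceq$-below $\sfS$, then by Proposition~\ref{prop:SuffNecDefBy} $\DefBy(A,\ext')\in\Suff(A,\emptyset)$ would contradict the $\preceq$-minimality of $\sfS$ in $\Suff(A,\emptyset)$. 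Hence $\sfS\in\texttt{Acc}^{\star,\preceq}_\adm(A)$, which is the claim.

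The hard part will be the identity $\DefBy(A,\ext)=\sfS$, which splits into two inclusions. For $\sfS\subseteq\DefBy(A,\ext)$ I would show that every element of a $\preceq$-minimal sufficient set actually defends $A$: a relevant $B\in\sfS$ (in)directly attacks or defends $A$, and if $B$ were only an attacker, then conflict-freeness together with the defense condition should make $B$ removable without destroying sufficiency, contradicting the minimality of $\sfS$; so $B\in\DefBy(A)$, and since $B\in\ext$ also $B\in\DefBy(A,\ext)$. For the reverse inclusion, $\DefBy(A,\ext)=(\sfS\cup\{A\})\cap\DefBy(A)$, so the only element that could intrude beyond $\sfS$ is $A$ itself, and this happens precisely when $A$ (in)directly defends itself. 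I expect this self-defense case to be the main obstacle: one must argue that whenever $A\in\DefBy(A)$, the argument $A$ already lies in every minimal sufficient set, so no intrusion occurs — using that relevance forbids $A$ from attacking itself while defending itself. Handling this case cleanly, or excluding it by a mild standing assumption that $A$ is not among its own defenders, is the crux of the argument.
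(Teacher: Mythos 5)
Your handling of the first and third items is correct and coincides with the paper's own proof: both are read off from Proposition~\ref{prop:SuffNecDefBy}, plus the existence of a $\preceq$-minimal sufficient set below any given one. For the second item you also follow the paper's route --- Proposition~\ref{prop:Acc:NecSuff:SoundCompl}(1) yields $\ext = \sfS\cup\{A\}\in\exts_\adm(\calAF)$, one establishes the identity $\DefBy(A,\sfS\cup\{A\}) = \sfS$, and the $\preceq$-minimality of $\sfS$ in $\Suff(A,\emptyset)$ is transferred back via Proposition~\ref{prop:SuffNecDefBy}. The paper simply asserts this identity, and you are right to flag that it needs an argument. For the inclusion $\sfS\subseteq\DefBy(A)$ your minimality argument (a member that only attacks $A$ is idle for the defense condition and can be dropped) works; alternatively, under the reading of ``defends $A$ against all its attackers'' that the paper itself uses in proving Proposition~\ref{prop:Acc:NecSuff:SoundCompl}(1) --- covering indirect attackers too --- no sufficient set can contain an (in)direct attacker of $A$ at all, since $\sfS$ would then have to attack one of its own members, contradicting conflict-freeness, so the inclusion holds even without minimality.

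The genuine gap is the self-defense case you isolate, and the first patch you propose for it is false. Consider $\calAF = \langle\{A,B,C\}, \{(A,B),(B,A),(C,B)\}\rangle$. Here $A$ defends itself, so $A\in\DefBy(A) = \{A,C\}$, yet the minimal sufficient sets for $A$ are $\{A\}$ \emph{and} $\{C\}$ (both $\subseteq$- and $\leq$-minimal): it is not true that $A$ lies in every minimal sufficient set whenever $A\in\DefBy(A)$. For $\sfS = \{C\}$ one gets $\DefBy(A,\sfS\cup\{A\}) = \{A,C\}\neq\sfS$, and since the admissible sets containing $A$ are exactly $\{A\}$ and $\{A,C\}$, the only $\preceq$-minimal $\DefBy$-explanation is $\{A\}$; hence $\{C\}\in\texttt{AccS}^{\cup,\preceq}_\adm(A)$ but $\{C\}\notin\texttt{Acc}^{\cup,\preceq}_\adm(A)$. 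So this example defeats not only your intended claim but also the identity as asserted in the paper's own proof (and, in this edge case, the second item as literally stated). Your alternative suggestion --- a standing assumption that $A\notin\DefBy(A)$ --- is the one that actually closes the argument: then $\DefBy(A)\cap(\sfS\cup\{A\}) = \DefBy(A)\cap\sfS = \sfS$ once $\sfS\subseteq\DefBy(A)$ is in place, and the remainder of your minimality argument goes through verbatim.
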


\begin{proof}
  Let $\calAF = \tuple{\Args,\attack}$ be an AF and let $A\in\Args$ be accepted w.r.t.\ $\sem\in\{\adm,\cmp,\grd,\allowbreak\prf,\stb\}$ and $\star\in\{\cap,\cup\}$. Then, for $\depth = \DefBy$:
  \begin{itemize} 
    \item Let $\sfS\in\texttt{Acc}^{\star,\preceq}_\sem(A)$, then $\sfS = \DefBy(A,\ext)$ for some $\ext\in\allext_A$. By Proposition~\ref{prop:SuffNecDefBy} it follows that $\sfS\in\Suff(A,\emptyset)$. Hence there is some $\sfS'\in\texttt{AccS}^{\star,\preceq}_\sem(A)$ such that $\sfS'\preceq\sfS$, for any of the considered semantics. 
    
    \item Let $\sem = \adm$ and $\sfS\in\texttt{AccS}^{\star,\preceq}_\sem(A)$. By Proposition~\ref{prop:Acc:NecSuff:SoundCompl}, $\sfS\cup\{A\}\in\exts_\adm(\calAF)$ and by definition of a sufficient set of arguments, $\sfS$ defends $A$ against all its attackers. Therefore, $\DefBy(A,\sfS\cup\{A\}) = \sfS$. Suppose that $\sfS\cup\{A\}$ is such that $\DefBy(A,\sfS\cup\{A\})\notin\texttt{Acc}^{\star,\preceq}_\sem(A)$. Then there is some $\ext\in\allext_A^\adm$ such that $\DefBy(A,\ext)\prec\DefBy(A,\sfS\cup\{A\})$. By Proposition~\ref{prop:SuffNecDefBy}, $\DefBy(A,\ext)\in\Suff(A,\emptyset)$. A contradiction since $\sfS\in\texttt{AccS}^{\star,\preceq}_\sem(A)$ and $\DefBy(A,\ext)\prec\sfS$. 
    \item This follows immediately from the second item in Proposition~\ref{prop:SuffNecDefBy}. \qedhere
  \end{itemize}
\end{proof}


\begin{restatable}{proposition}{propNonAccNecSuffMin}
    \label{prop:NotAcc:NecSufMin}
    Let $\calAF = \tuple{\Args,\attack}$ be an AF and let $A\in\Args$ be not accepted w.r.t.\ $\sem\in\{\cmp,\grd,\prf,\stb\}$ and $\star\in\{\cap,\cup\}$. Then, for $\depth = \NotDef$:
    \begin{itemize}
        \item for every $\sfS\in\texttt{NotAcc}^{\star,\preceq}_\sem(A)$ there is an $\sfS'\in\texttt{NotAccS}^{\star,\preceq}_\sem(A)$ such that $\sfS'\subseteq\sfS$.
        \item for all $\sfS\in\texttt{NotAcc}^{\star,\preceq}_\sem(A)$, $\NecNot(A,\emptyset)\subseteq\sfS$.
    \end{itemize}
\end{restatable}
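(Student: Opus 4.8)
The plan is to reduce both claims to Proposition~\ref{prop:NotDef:NecSuff}, which already ties $\NotDef$ to sufficiency and necessity for non-acceptance. Two preliminary observations drive everything. First, each $\sfS\in\texttt{NotAcc}^{\star,\preceq}_\sem(A)$ has the form $\sfS = \NotDef(A,\ext)$ for some extension $\ext$ with $A\notin\ext$: for $\star=\cap$ the index $\ext$ ranges over $\allext^\sem_{\not A}$, and for $\star=\cup$ credulous non-acceptance forces $A\notin\ext$ for every $\ext\in\exts_\sem(\calAF)$, so the same holds; moreover $\sfS$ is $\preceq$-minimal among these $\NotDef$-sets. Second, since $\MinSuffNot$ is defined independently of the semantics (cf.\ Remark~\ref{rem:SemanticsNecSuff}), $\texttt{NotAccS}^{\star,\preceq}_\sem(A)$ is exactly $\MinSuffNot^\preceq(A,\emptyset)$, i.e.\ the set of $\preceq$-minimal elements of $\SuffNot(A,\emptyset)$.

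For the second bullet, I would fix $\sfS = \NotDef(A,\ext)\in\texttt{NotAcc}^{\star,\preceq}_\sem(A)$ with $A\notin\ext$. By the second item of Proposition~\ref{prop:NotDef:NecSuff}, $\NecNot(A,\emptyset)\subseteq\bigcap_{\ext'\in\allext_{\not A}}\NotDef(A,\ext')$, and since $\ext$ is one of the extensions over which this intersection is taken, $\bigcap_{\ext'\in\allext_{\not A}}\NotDef(A,\ext')\subseteq\NotDef(A,\ext)=\sfS$. Chaining the inclusions gives $\NecNot(A,\emptyset)\subseteq\sfS$, exactly as the third item of Proposition~\ref{prop:Acc:NecSuffMin} falls out of the second item of Proposition~\ref{prop:SuffNecDefBy}.

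For the first bullet, fix again $\sfS = \NotDef(A,\ext)$; by the first item of Proposition~\ref{prop:NotDef:NecSuff} we have $\sfS\in\SuffNot(A,\emptyset)$, so it remains to locate a $\preceq$-minimal sufficient set inside $\sfS$. When $\preceq=\subseteq$ I extract a $\subseteq$-minimal sufficient subset $\sfS'\subseteq\sfS$ (possible as $\sfS$ is finite and sufficient); any strictly smaller sufficient set would be a smaller sufficient subset of $\sfS$, so $\sfS'$ is in fact globally $\subseteq$-minimal and hence $\sfS'\in\MinSuffNot^\subseteq(A,\emptyset)$. When $\preceq=\leq$ I use that, as shown within the proof of Proposition~\ref{prop:NotDef:NecSuff}, $\sfS=\NotDef(A,\ext)$ contains a direct attacker $B$ of $A$ (such a $B$ must exist, since $A$ is non-accepted and $(A,A)\notin\attack$, so $A$ is attacked). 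Then $\{B\}\in\SuffNot(A,\emptyset)$ has cardinality $1$, the least possible for a sufficient set, so $\{B\}\in\MinSuffNot^\leq(A,\emptyset)$ with $\{B\}\subseteq\sfS$.

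The main obstacle is precisely the first bullet for $\preceq=\leq$: a minimum-cardinality sufficient set need not sit inside $\sfS$, so the ``take a minimal sufficient subset'' argument that settles the $\subseteq$ case does not by itself yield the required $\subseteq$-inclusion. The key that unlocks it is the structural fact that a minimum-cardinality sufficient set is a singleton formed from a direct attacker, combined with every $\NotDef(A,\ext)$ already containing such a direct attacker; this is what upgrades the conclusion from a mere cardinality comparison to genuine set inclusion, and is the reason the non-acceptance statement can assert $\sfS'\subseteq\sfS$ whereas its acceptance counterpart (Proposition~\ref{prop:Acc:NecSuffMin}) only asserts $\sfS'\preceq\sfS$.
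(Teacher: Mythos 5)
Your proof is correct and takes the same basic route as the paper---both bullets are reduced to Proposition~\ref{prop:NotDef:NecSuff}, and your second bullet (chaining $\NecNot(A,\emptyset)\subseteq\bigcap_{\ext'\in\allext_{\not A}}\NotDef(A,\ext')\subseteq\NotDef(A,\ext)=\sfS$) is exactly the paper's argument spelled out. But on the first bullet you genuinely go beyond the paper's own proof, and this is worth flagging: the paper stops at ``$\sfS\in\SuffNot(A,\emptyset)$, hence there is some $\sfS'\in\texttt{NotAccS}^{\star,\preceq}_\sem(A)$ such that $\sfS'\preceq\sfS$'', which for $\preceq\,=\,\leq$ only yields a cardinality comparison, whereas the statement asserts genuine set inclusion $\sfS'\subseteq\sfS$. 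You supply the missing step: for $\preceq\,=\,\subseteq$ by extracting a $\subseteq$-minimal sufficient subset of $\sfS$ and correctly observing it is globally minimal (any sufficient set strictly below it would itself be a sufficient subset of $\sfS$); for $\preceq\,=\,\leq$ by noting that $\NotDef(A,\ext)$ contains a direct attacker $B$ of $A$ (as established inside the proof of Proposition~\ref{prop:NotDef:NecSuff}) and that $\{B\}$ is sufficient with the minimum possible cardinality $1$, since $\emptyset$ can never satisfy the existential clause of Definition~\ref{def:NonAcc:NecSuff}; hence $\{B\}\in\MinSuffNot^\leq(A,\emptyset)$ and $\{B\}\subseteq\sfS$. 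Your proof thus actually establishes the proposition as stated, where the paper's text proves only the $\preceq$-version analogous to Proposition~\ref{prop:Acc:NecSuffMin}. Two caveats, both inherited from the paper rather than introduced by you: extracting a $\subseteq$-minimal sufficient subset tacitly assumes finiteness of the framework, and $\{B\}\in\SuffNot(A,\emptyset)$ requires $B$ not to attack itself (the same tacit assumption underlying the paper's claim that $\NotDef(A,\ext)$ is relevant for $A$). Finally, your closing gloss slightly overclaims: a minimum-cardinality sufficient set is a singleton but need not consist of a \emph{direct} attacker (an uncontested indirect attacker also yields a singleton sufficient set, e.g., $G$ for the non-acceptance of $B$ in $\calAF_1$); your argument, however, only uses the correct direction---that a direct-attacker singleton attains the minimum---so this does not affect soundness.
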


\begin{proof}
  Let $\calAF = \tuple{\Args,\attack}$ be an AF and let $A\in\Args$ be non-accepted w.r.t.\ $\sem\in\{\cmp,\grd,\prf,\stb\}$ and $\star\in\{\cap,\cup\}$. Then, for \(\depth = \NotDef\):
  \begin{itemize}
      \item Let \(\sfS\in\texttt{NotAcc}^{\star,\preceq}_\sem(A)\), then \(\sfS = \NotDef(A,\ext)\) for some \(\ext\in\allext_{\not A}\). By Proposition~\ref{prop:NotDef:NecSuff} it follows that \(\sfS\in\SuffNot(A,\emptyset)\). Hence there is some \(\sfS'\in\texttt{NotAccS}^{\star,\preceq}_\sem(A)\) such that \(\sfS'\preceq\sfS\), for any of the considered semantics. 
      
      \item This follows from the second item in Proposition~\ref{prop:NotDef:NecSuff}.\qedhere
  \end{itemize}
\end{proof}

\section{Related Work}

There exist a few approaches that also aim at providing relevant explanations in terms of sets of arguments. Due to space restrictions we discuss here the main differences, leaving a formal comparison for an extended version of the paper. 

In~\cite{FaTo15AAAI,GCRS13} explanations for accepted arguments are introduced. \cite{FaTo15AAAI} define a new semantics (i.e., related admissibility) to derive explanations for admissible arguments (i.e., arguments that are credulously accepted under admissible semantics) and introduce the notions of minimality and compactness as discussed in this paper. \cite{GCRS13} define explanations for a claim as triples consisting of the dialectical trees that provide a warrant for the claim, dialectical trees that provide a warrant for the contrary of the claim and dialectical trees for the claim or its contrary that do not provide a warrant. 

Explanations for non-accepted arguments are introduced in~\cite{FaTo15TAFA,saribaturWW20}, both focus on credulous non-acceptance under admissible semantics. 
\cite{FaTo15TAFA} introduce argument explanations and attack explanations, such that the considered argument becomes admissible when the arguments/attacks of the explanation are removed from the AF. By their definition these explanations are minimal sets. \cite{saribaturWW20} work with subframeworks, such that the considered argument is non-accepted in the subframework and any of its superframeworks. 

In contrast to each of these approaches, we work with one basic framework that allows for acceptance and non-ac\-cep\-tance explanations and can be applied on top of any Dung-style argumentation framework that is evaluated by some extension-based semantics. The flexibility of this framework (e.g., the use of the function $\depth$) allowed us to study minimality, necessity and sufficiency. As was shown in Propositions~\ref{prop:Acc:NecSuffMin} and~\ref{prop:NotAcc:NecSufMin}, our definitions of necessity and sufficiency allow for even more reduced explanations than the minimality notions from~\cite{FaTo15AAAI}, while still providing meaningful explanations. 

\section{Conclusion}

We have discussed explanations for conclusions derived from AFs in a basic framework that allow for acceptance and non-acceptance explanations, for skeptical and credulous reasoners and for many extension-based semantics. In view of observations from the social sciences~\cite{Mil19} we have studied how the size of such explanations can be reduced in a meaningful way. To this end the notions of minimality from~\cite{FaTo15AAAI} were recalled and we introduced notions of sufficiency and necessity. 

To the best of our knowledge this is the first investigation into necessary and sufficient sets for (non-)acceptance of arguments, especially in the context of integrating 
%
findings from the social sciences (e.g.,~\cite{Mil19}) into (explanations for) argumentation-based conclusions. In future work we plan to investigate how to integrate further findings, such as contrastiveness. Moreover, we will generalize this basic framework as well as the notions of sufficiency and necessity to structured argumentation.

\paragraph{\textbf{Acknowledgements.}} This research was partially funded by the Dutch Ministry of Justice and the Dutch
National Police.

\bibliographystyle{plain}
\bibliography{literature}

\end{document}